\documentclass[journal]{IEEEtran}
\usepackage{graphicx}
\usepackage{epstopdf}
\usepackage{subfigure}
\usepackage{amsthm}
\usepackage{amsmath}
\usepackage{color}
\usepackage{algorithm}
\usepackage{algorithmicx}
\usepackage[noend]{algpseudocode}
\usepackage[normalem]{ulem}
\usepackage{amssymb}
\usepackage{mathrsfs}
\usepackage{diagbox}
\usepackage{booktabs}
\usepackage{multirow}
\newtheorem{theorem}{Theorem}
\newtheorem{lemma}{Lemma}
\newtheorem{corollary}{Corollary}
\begin{document}

\title{Uncertainty Minimization for Personalized Federated Semi-Supervised Learning  \thanks{Y. Shi and S. Chen are with the School of Internet of Things, Nanjing University of Posts and Telecommunications, Nanjing, China; H. Zhang is with the School of Computer and Communication Engineering, University of Science and Technology Beijing, Beijing, China (e-mail: yanhang98@126.com; sgchen@njupt.edu.cn; haijunzhang@ieee.org). \emph{Corresponding author: Siguang Chen}.}}

\author{Yanhang~Shi, Siguang~Chen, \emph{Member}, \emph{IEEE}, and~Haijun~Zhang, \emph{Senior Member}, \emph{IEEE}}
\maketitle

\begin{abstract}
Since federated learning (FL) has been introduced as a decentralized learning technique with privacy preservation, statistical heterogeneity of distributed data stays the main obstacle to achieve robust performance and stable convergence in FL applications. Model personalization methods have been studied to overcome this problem. However, existing approaches are mainly under the prerequisite of fully labeled data, which is unrealistic in practice due to the requirement of expertise. The primary issue caused by partial-labeled condition is that, clients with deficient labeled data can suffer from unfair performance gain because they lack adequate insights of local distribution to customize the global model. To tackle this problem, 1) we propose a novel personalized semi-supervised learning paradigm which allows partial-labeled or unlabeled clients to seek labeling assistance from data-related clients (helper agents), thus to enhance their perception of local data; 2) based on this paradigm, we design an uncertainty-based data-relation metric to ensure that selected helpers can provide trustworthy pseudo labels instead of misleading the local training; 3) to mitigate the network overload introduced by helper searching, we further develop a helper selection protocol to achieve efficient communication with acceptable performance sacrifice. Experiments show that our proposed method can obtain superior performance and more stable convergence than other related works with partially labeled data, especially in highly heterogeneous setting.
\end{abstract}

\begin{IEEEkeywords}
    Federated learning, Semi-supervised learning, Uncertainty estimation, Data heterogeneity.
\end{IEEEkeywords}

\maketitle

\IEEEpeerreviewmaketitle

\section{Introduction}\label{sec:introduction}
\IEEEPARstart{O}{ver} the recent years, with the expansion of personal mobile devices and wireless networks, the amount of user data has witnessed tremendous growth at the edge side. To reap the benefits of large-scale data, machine learning based smart services have been investigated in massive studies (e.g., image classification [1], nature language processing [2] and speech recognition [3]), and achieved remarkable success. However, these techniques require to gather scattered user data into a single dataset and train model centrally. While in real-world applications, collecting distributed data to a central server is usually unrealistic due to the data privacy and commercial competition. In this regard, federated learning (FL) [4] has been proposed as a distributed learning algorithm which aims to collaboratively train a global deep learning model without sharing local data. Unlike centralized training, engaged clients in FL undertake the model training task, while the central server conducts the information aggregation via the weighted average of local parameters. In canonical FL, a single training round mainly includes following steps: 1) the central server samples a subset of clients and distributes the global model to them; 2) selected clients train the global model with their private data and upload the modified models back to server; 3) the central server aggregates the local models to update the global model, which is used for next training round. In FL system, the server and clients iteratively perform the above three stages until the model converged.

\par With the communication-efficient pattern and privacy-preserved information aggregation procedure, FL has been widely utilized in advanced applications, such as Internet of Things (IoT) [5] and smart healthcare [6]. Nevertheless, FL still faces three major challenges to obtain guaranteed convergence rate and appropriate generalization ability to unseen data. One challenge is the statistic heterogeneity of users' local datasets, also named non-independent and identically distributed (Non-IID) data. In this setting, the local models learned by clients are prone to interfere each other in the aggregation stage, leading to unstable convergence rate and inferior performance. Several prior works have been proposed to address the issues caused by Non-IID data. For instance, Li et al. in [7] proposed the regularization method to constrain the model drift in the local training stage, which achieves more stable convergence than conventional FL. Other studies like [8] sought to adjust the gradient directions of different clients to prevent the interference involved in central aggregation.
\par While these works effectively resolve the model conflicting effect induced in Non-IID FL, modifying or eliminating the skew of local parameters might cause the performance unfairness, which means the generalization ability of the global model to different clients diverges to a large extent. Building a model that can be fairly generalized to distinct clients is the second challenge of FL. In order to preserve the local update directions and mitigate the performance bias, some studies [9], [10] and [11] considered the model personalization in federated learning, namely the personalized federated learning (pFL). The pFL based methods allow each participant to maintain a personalized model while still derive benefit from collaborative training. Under the strong assumption that all on-device data are fully labeled, these methods have achieved the comparably fair performance in each client.
\par In realistic applications, the available labeled data distributed in multiple sources can be limited. For instance, in medical image classification field, due to the laborious and expert-requiring labeling procedure, the labeled data can be quite restricted in personal devices or non-professional institutions. To overcome this limitation, some previous works proposed to directly implement conventional semi-supervised methods in client side [12] or modify these techniques to adjust the FL setting [13]. However, these works ignore the primary issue of federated learning in semi-supervised scenarios, which is the performance fairness for clients. Under the partial-labeled and Non-IID data condition, clients with scarce labeled data will struggle to obtain competent personalized models due to insufficient knowledge of their local data distributions, while those with adequate labeled data can dominate the collaborative training and obtain superior performance. Thus, concern about performance fairness resurfaces as a main challenge for FL applications.
\par In this paper, we study the uncertainty minimization for personalized federated semi-supervised learning (UM-pFSSL) to address these three challenegs in semi-supervised scenario: (1) data heterogeneity of engaged clients; (2) the lack of knowledge to personalized the global model; (3) performance fairness to local data distributions. The main contributions of which include:
\par 1) We present a novel semi-supervised learning paradigm for pFL, which aims to tackle the performance bias that stems from partial-labeled and Non-IID data in realistic applications. In this paradigm, partial-labeled or unlabeled clients can selectively extract knowledge from data-related clients to annotate the unlabeled data locally. Supervised training on these pseudo labels enables the clients efficiently mining the information of unlabeled data and resolves the dilemma of insufficient insights into local data distributions, so as to achieve fairer performance. After that, benefited clients can further spread out their knowledge extracted from federation by providing labeling assistance to others, which strengthens the knowledge sharing efficiency of FL system.
\par 2) A potential threat of learning from peer nodes is that, clients who received detrimental knowledge (i.e., counterfactual pseudo labels) tend to promptly poison the entire federation through the knowledge feedback procedure. To obviate this risk, we define an uncertainty-based data-relation metric. Based on which, each client can effectively select relevant helpers which are more apt to provide trustworthy predictions for local unlabeled data. Furthermore, with the theoretical guarantee, aggregating the parameters from selected helpers can further improve the generalization ability of each client.
\par 3) Greedily searching helpers from federation is prone to bring huge communication cost to the system. To exempt the extra network burden, we design a ranking update protocol for helper selection. Specifically, rather than applying selection to the complete client group, each client chooses to randomly download a small subset of external models and evaluates the data-relation accordingly. Thereafter, irrelevant models of former rounds will be replaced with more relevant ones in current round. This replacing procedure only executes in the early few rounds, and selected helpers will be updated periodically through the server's model pool. By applying this protocol, our scheme saves considerable communication overhead with acceptable performance sacrifice.
\par Finally, we compare the UM-pFSSL with Non-IID methods and personalized methods in heterogeneous partial-labeled setting. The results show that our proposed scheme can achieve more robust performance, superior convergence rate and comparable fairness than compared methods without introducing extra communication cost.
\par The rest of this paper is arranged as follows. Section \ref{sec:related} covers the related works. In Section \ref{sec:paradigm}, the designed semi-supervised learning paradigm is described. In Section \ref{sec:method}, the UM-pFSSL algorithm is given in detail. The theoretical analysis of the generalization ability is elaborated in Section \ref{sec:theory}. Then, the experiment results are presented in Section \ref{sec:result}. Finally, conclusions are drawn in Section \ref{sec:conclusion}.

\section{Related work} \label{sec:related}
\subsection{Federated Learning with Non-IID Data}
\par Due to the advantages in privacy preservation and communication efficiency, federated learning has become an appealing research field. The idea of the standard federated learning method FedAvg [4] is to directly conduct the weight average on local parameters of clients, which fails to obtain robust performance and convergence rate with Non-IID data. A number of prior works have attempted to mitigate the deficiency of FedAvg. In [7], the authors leveraged a regularization term to close the divergence between local and global parameters. By applying which, the global information from previous rounds can avoid being forgotten in the local training stage. Instead of using the global parameter to bound the local drift, works [14], [15] and [8] investigated to modify the local gradient by computing a control variable for each client. For example, authors in [15] utilized a dynamic regularizer for each device at each round, which can align the local-level update to the global direction. While this type of method successfully enhances the convergence rate in label distribution shift FL and achieves superior performance than conventional method FedAvg, some studies like [16] and [17] considered to deal with the feature shift in Non-IID data. For instance, based on the utility of batch normalization (BN), Li et al. in [17] developed a novel method that excludes the BN layer in the averaging step, which can theoretically improve the convergence rate and performance robustness with variant client features. Although these works developed efficacious strategies to overcome the issues in Non-IID FL, due to the universal average operation, the target distribution of global model can be divergent to local data distributions. Thus, the resulting global model does not confirm the unbiased performance on different clients.
\subsection{Personalized Federated Learning}
\par Different from above single-model based schemes, other works aim to build personalized models for individual clients while still draw shared knowledge from federation. The idea of pFL was first proposed in [18] with a multi-task learning setting, which applies a task-specific model to each client. Inspired by this design, some works [19], [20] and [21] conducted the clustering approaches to group clients to smaller cliques based on the similarity of their local distributions and learn specialized model for each clique. For instance, Briggs et al. in [20] proposed a hierarchical clustering approach that separates the clients by the similarity of their local modifications to the global model and creates independent model for each cluster. By sharing representations between related clients, the clustering schemes can improve generalization of each model. Mixed model is also a prevalent learning scheme in pFL, several literatures such as [9], [22] and [23] considered to decompose neural network to the global and local parts. As a typical example, Arivazhagan et al. in [9] tried to only aggregate the bottom layers near the input side through federated average and remain the top layer with local updated parameters. The purpose of these studies is to prevent overfitting issue by aggregating the feature extraction capabilities learned by different clients while preserving the knowledge of the local data distributions. Besides above methods, some works [24] and [25] studied to utilize meta-learning-like methods to customize the global trained model to specific client. In [25], Fallah et al. utilized the model-agnostic meta-learning (MAML) framework to find an initial global model, which can be easily adapt to clients' personal datasets by performing a few steps of local training. These methods provide relatively fair concern to every single user and achieved notable results; however, they lack the consideration of different labeling capabilities of individuals, which requires semi-supervised learning techniques to improve the performance of limited labeled datasets.
\subsection{Semi-Supervised Learning}
\par As an attractive topic in centralized machine learning, semi-supervised learning (SSL) seeks to tackle the label scarcity by leveraging the unlabeled data. As a powerful approach for SSL, consistency regularization gives the intuition that, a robust neural network should generate invariant predictions for the same unlabeled data with semantic-preserving perturbations [26]. Based on this assumption, many related works [27], [28] and [29] proposed to combine this restriction with data augmentation methods. Another common strategy for SSL is the entropy minimization [30], which aims to reduce the prediction entropy of unlabeled data. More recently, pseudo-labeling method [31] proposed to integrate the idea of consistency regularization to entropy minimization. In detail, the authors used two different data augmentation methods to transform the unlabeled data, and supervised the strongly augmented samples with sharpened low-entropy prediction on the weakly augmented samples. Since the semi-supervised learning problem also exists in the FL scenario, some current studies [13], [32] and [33] investigated to construct SSL framework for FL based applications. As an instance, Jeong et al. in [13] presented an inter-client consistency loss and a disjoint learning pattern on labeled and unlabeled data. These works mainly concentrate on the situation that all clients are partially labeled with consistent ratio or the clients are simply categorized to labeled clients or unlabeled clients. However, in general case, clients with different identities may have diverse labeled ratios and heterogeneous data distributions, leading to the requirement of fair performance regard.

\begin{figure*}[htb]
   \centering
   \includegraphics[width=5.5in]{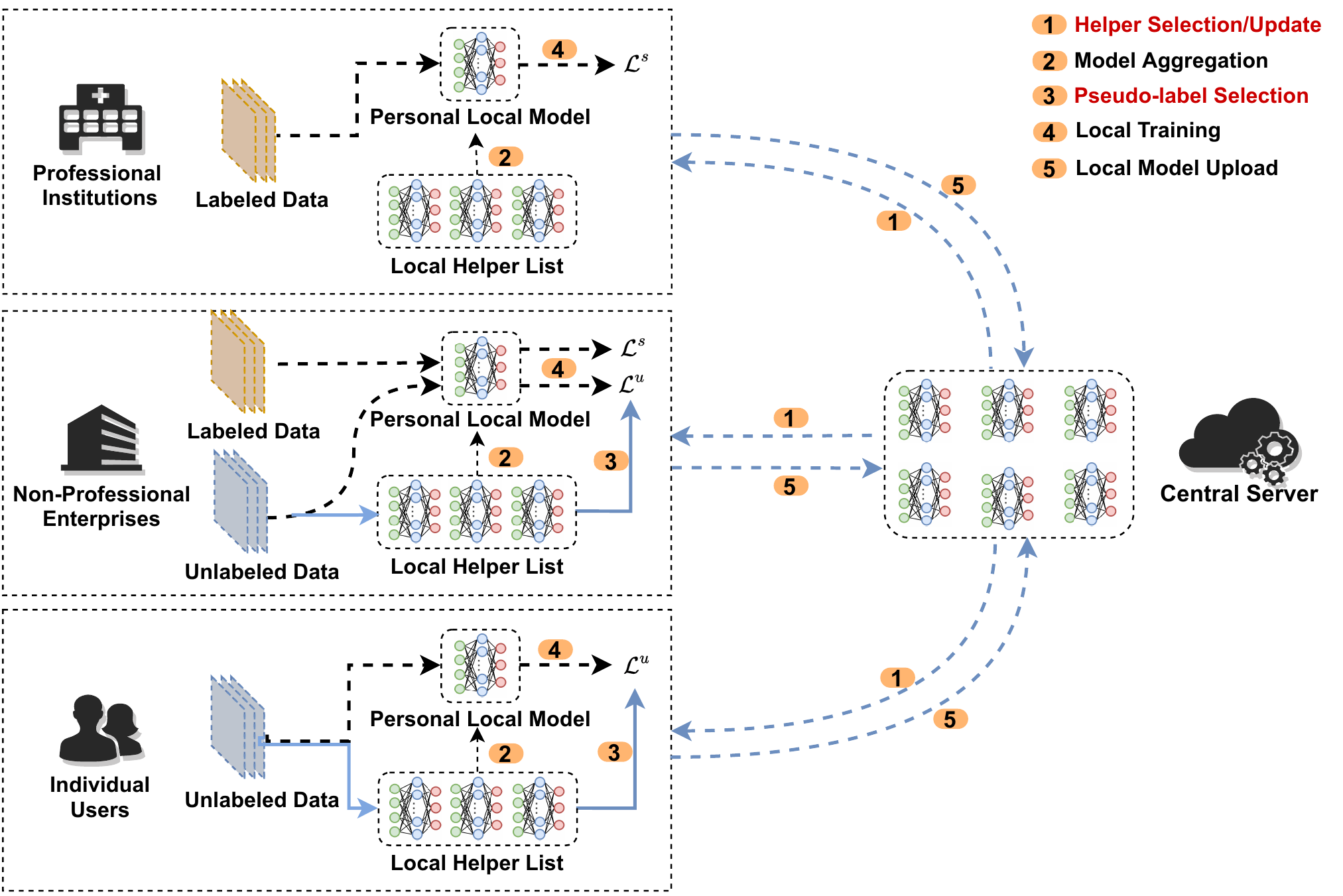}
   \caption{The semi-supervised paradigm for personalized federated learning.}
   \label{Fig.1}
\end{figure*}

\section{Learning Paradigm} \label{sec:paradigm}
\par In FL, the inter-client data heterogeneity can be mainly divided into two aspects: label distribution variance and image domain variance (which are also called label distribution skew and feature distribution skew). The former is caused by the difference of label distributions on different devices, while the latter represents the feature disparity caused by the different scanners/sensors, geographical locales and personal photography manners, which are potentially heterogeneous. Besides data heterogeneity, the annotation capabilities of participants tend to be heterogeneous in practical scenarios. As in medical applications, professional institutions or clinics possess the expertise to annotate the medical records (such as medical images), leading to mostly labeled or fully labeled datasets, while individual users lack the knowledge or time to label their data, their local datasets can be completely unlabeled. In addition, we further consider the non-professional enterprises which contain limited ratios of labeled data, due to the reason they can classify few easy samples. To be more general, the labeled-to-unlabeled ratio per clients can be arbitrarily different, which is called label heterogeneity.

\par Under the consideration of both data heterogeneity and label heterogeneity, we design a semi-supervised learning paradigm named \textit{Personalized Federated Semi-Supervised Learning} (pFSSL). Let $[K]:=\{1,... ,K\}$ be the set of all clients, which can be classified into professional institutions, non-professional enterprises and individual users according to their labeled ratios. In our paradigm, each client $k \in [K]$ carries the samples both from the private data distribution $P_{X,Y}^k$ and its marginal distribution $P_X^k$, where $X$ and $Y$ denotes the set of features and the set of labels respectively. In semi-supervised learning, these samples can be further split to local labeled dataset $D_k^s$ and unlabeled dataset $D_k^u$. The local optimization objective for semi-supervised learning is to minimize the loss function:
\begin{equation}
\mathcal{L}^{s s l}\left(w_{k}\right)=\mu_{k} \mathcal{L}^{s}\left(w_{k}\right)+\left(1-\mu_{k}\right) \mathcal{L}^{u}\left(w_{k}\right),
\end{equation}
where $\mathcal{L}^s$  and $\mathcal{L}^u$ represent the supervised loss and unsupervised loss, respectively, and $w_k$ is the client-specific model parameter. The labeled-to-all data ratio is defined as
\begin{equation}
\mu_{k}=\frac{\left|D_{k}^{s}\right|}{\left|D_{k}^{s}\right|+\left|D_{k}^{u}\right|},
\end{equation}
which is utilized to control the supervised and unsupervised ratio of (1). The goal of pFSSL is to enable client collaboration and optimize the joint loss function:
\begin{equation}
W^{*}=\underset{W}{\operatorname{argmin}} \frac{1}{K} \sum_{k=1}^{K} \mathcal{L}^{ssl}\left(w_{k}\right),
\end{equation}
where $W$ denotes the set of personalized parameters $\{w_{k}\}_{k=1}^{K}$. In this work, we simply utilize the cross entropy between the prediction and ground truth as the supervised loss function:
\begin{equation}
\mathcal{L}^{s}\left(w_{k}\right)=\sum_{\left(x_{i}, y_{i}\right) \in D_{k}^{s}} C E\left(f_{w_{k}}\left(x_{i}\right), y_{i}\right),
\end{equation}
where $f_{w_k}$ represents the neural network parameterized by $w_k$.

\par Personalized federated learning aims to handle the data heterogeneities by learning personalized models which are well-generalized to their private data distributions. However, in semi-supervised scenario, different ratios of labeled data can bias the perception of clients about their local distributions, leading to performance fairness of personalized models. In this work, we integrate a novel labeling mechanism to mitigate this problem; the five key operations of our paradigm are depicted in Fig. \ref{Fig.1} and given in detail as follows.

\par \textbf{Helper Selection/Update:} We assign the central server as the storage center of local models and all clients duplicate their models to the model pool on the server. For a single training round, each client $k \in [K]$ searches new helper agents from server according to their similarity in data or updates models of former selected helper agents. The downloaded helpers' models will be stored in the helper list $H_k$ (the detail of helper selection is described in Section \ref{sec:method}).

\par \textbf{Model Aggregation:} For knowledge aggregation, we implement the weighted average of helper' models to create updated local models (with each client $k$ itself as a helper in the list $H_k$). This procedure is similar to cluster-based approaches [19], [20] and [21], the difference is that our method is implemented at edge-side and private to each client.
\par \textbf{Pseudo-label Selection:} Then, to further personalize the aggregated model based on the local data distribution, we operate the labeling mechanism for local unlabeled data. Given an unlabeled point $x_i \in D_k^u$, the labeling mechanism tends to request the labeling assistance from helpers, and gives the pseudo prediction $\widehat{y}_i$ with minimum uncertainty (the detail of pseudo-labeling is presented in Section \ref{sec:method}).

\par \textbf{Local Training:} With confident pseudo labels, each client will operate the local training procedure by minimizing (4) and unsupervised loss function:
\begin{equation}
\mathcal{L}^{u}\left(w_{k}\right)=\sum_{x_{i} \in D_{k}^{u}} K L\left(f_{w_{k}}\left(x_{i}\right), \widehat{y}_i \right),
\end{equation}
The minimization of (5) is also named knowledge distillation procedure in many FL literatures [34], [35] and [36], which assembles the heterogeneous models by approximating their predictions to same inputs.

\par \textbf{Local Model Update:} After the local training, these clients will upload their models for updating the copies on the server.

\section{Uncertainty minimization algorithm}\label{sec:method}
\par This section first gives the definition of predictive uncertainty in deep learning and subsequently presents the Bayesian Neural Networks (BNNs) based labeling mechanism for the unlabeled data in pFSSL. Based on this foundation, we design the uncertainty minimization algorithm for pFSSL (UM-pFSSL).

\subsection{Labeling Mechanism}
\par In machine learning, the standard training procedure of supervised learning aims to maximize the log likelihood:
\begin{equation}
\max _{w} \sum_{(x, y) \sim \mathcal{D}} \log (p(y|x;w)).
\end{equation}
However, fixed parameter fails to estimate the uncertainty of predictive results on test data. BNNs tackle this problem by replacing the parameter with the posteriori $p(w|D)$, which can be obtained by Bayes rule
\begin{equation}
p(w|\mathcal{D})=\frac{p(\mathcal{D}|w) p(w)}{p(\mathcal{D})}.
\end{equation}
With posteriori (7), we can predict the label of unseen data $x$ by the conditional probability between training data and test data
\begin{equation}
p(y|x, \mathcal{D})=\int p(y|x, w) \cdot p(w|\mathcal{D}) dw.
\end{equation}
In BNNs literature [37], the entropy of distribution (8) is usually called epistemic uncertainty, which by definition, means the model's confidence about the prediction based on the knowledge learned from training data.

\par In our pFSSL paradigm, clients sharing relevant knowledge are encouraged to collaborate with each other to gain performance improvement. The epistemic uncertainty can be a powerful tool to estimate the data-relation between different clients. We assume there are $K$ clients engaged in the federated learning application; each maintains a labeled dataset denoted by $D_k^s$ and the posteriori of the model in client $k$ is denoted by $p(w_k | D_k^s)$.

\par In general, the integral part of (8) is approximated by a Monte Carlo integration. By sampling $\widehat{w}_k^t \sim p(w_k | D_k^s )$ for $T$ times, the predictive result and uncertainty of prediction on data $\widehat{x}$ can be approximated as follows:
\begin{equation}
p\left(y | \hat{x}, D_{k}^{s}\right)=\frac{1}{T} \sum_{t=1}^{T} f_{\widehat{w}_{k}^{t}}(\hat{x}),
\end{equation}
\begin{equation}
H\left(y | \hat{x}, D_{k}^{s}\right)=-\sum_{c=1}^{C} p\left(y=c | \hat{x}, D_{k}^{s}\right) \log p\left(y=c | \hat{x}, D_{k}^{s}\right),
\end{equation}
where the $f_{\widehat{w}_{k}^{t}}(\hat{x})$ is the predictive function parameterized by $\widehat{w}_{k}^{t}$ (i.e., neural network with softmax layer) and $C$ is the total number of categories in classification. To ease notation, we simplify the formulations as $P_{w_k}(y|\hat{x})=p\left(y | \hat{x}, D_{k}^{s}\right)$ and$H_{w_k}(y|\hat{x})=H\left(y | \hat{x}, D_{k}^{s}\right)$.

\par With Eqs. (9) and (10), we then design the labeling mechanism for the unlabeled dataset $D_k^u$ held by client $k$. Specifically, for a single unlabeled point $x_i \in D_k^u$, we adopt the most confident prediction from the helper agents:
\begin{equation}
\widehat{y}_{i}=\min _{H_{w_{j}}\left(y | x_{i}\right)} P_{w_{j}}\left(y | x_{i}\right), \forall w_{j} \in H_{k}.
\end{equation}

\par By applying (11) to all unlabeled points in every unlabeled dataset, we can easily obtain the requisite pseudo labels for unsupervised loss (5). The visualization of pseudo-labeling mechanism is given in Fig. \ref{Fig.2}. By minimizing the divergence between pseudo labels and the predictions of local model, these clients can learn from relevant neighbors thus to decrease the uncertainty about local unlabeled data.

\begin{figure}[htb]
   \centering
   \includegraphics[width=3in]{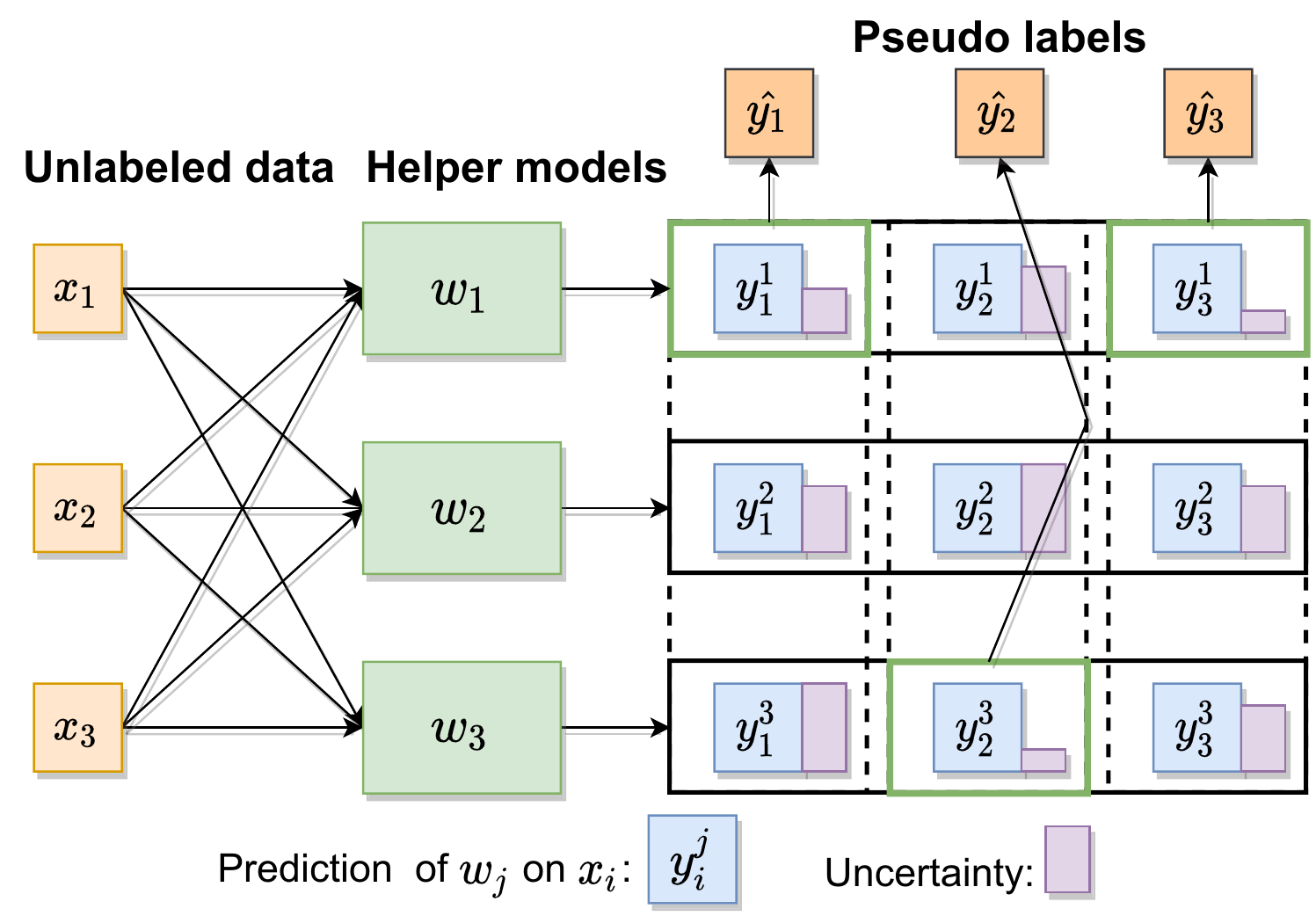}
   \caption{Illustration of pseudo-labeling mechanism.}
   \label{Fig.2}
\end{figure}

\par In this paper, we adopt the MC-dropout [38] to sample the model parameter from approximate distribution $\widehat{w}_k^t \sim p(w_k;dropout=True)$, thus, the posteriori can be transferred  without additional burden but model parameter $w_k$.

\subsection{Helper Selection}
\par As each client attempts to search its helper agents from the whole group, the amount of data that needs to be transmitted in a single communication round can be denoted as $K(K-1)|w_k|$, if $K$ and $|w_k|$ are large, the total transmitted data will be too heavy to apply the training efficiently. Thus, we design a communication-efficient helper selection protocol which can significantly decrease the network overhead of the system.
\begin{figure}[htb]
   \centering
   \includegraphics[width=3in]{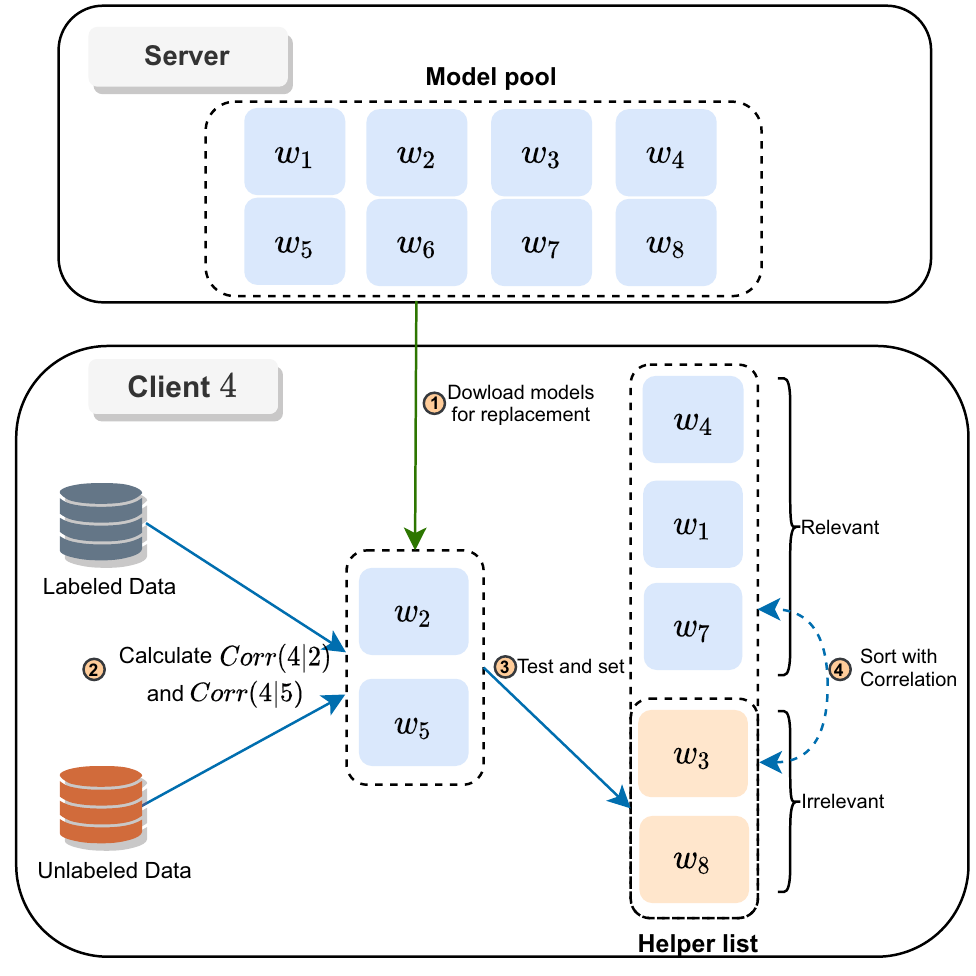}
   \caption{Illustration of helper-replace procedure.}
   \label{Fig.3}
\end{figure}
\par For a specific client $k$, instead of requesting all peer models in every communication round, we prefer to maintain a helper list with size $M$ to cache the helpers, where $M \ll K$. The cached helpers are selected according to the uncertainty-based relation among clients, in detail, we define the uncertainty of client $j$ on client $k$ as
\begin{equation}
H\left(D_{k}^{u} | D_{j}^{s}\right)=\sum_{x_{i} \in D_{k}^{u}} H_{w_{j}}\left(y | x_{i}\right),
\end{equation}
where $H_{w_j}(y | x_i)$ can be calculated by Eqs. (9) and (10), it represents the summation of uncertainties of client $j$'s model to client $k$'s unlabeled data. Then the corresponding metric can be defined as the combination of uncertainty of unlabeled data and accuracy on labeled data:

\begin{equation}
\operatorname{Corr}(k | j)=\left(1-\mu_{k}\right) \bar{H}\left(D_{k}^{u} | D_{j}^{s}\right)+\mu_{k} \operatorname{Acc}\left(D_{k}^{s} ; w_{j}\right),
\end{equation}
where $\mu_{k}$ is the ratio defined in Eq. (3) and $\bar{H}(D_k^u | D_j^s)$ is the residue of normalized entropy calculated by
\begin{equation}
\bar{H}\left(D_{k}^{u} | D_{j}^{s}\right) = 1 - H\left(D_{k}^{u} | D_{j}^{s}\right) / H(C),
\end{equation}where $H(C)$ is the maximum entropy of $C$-class prediction. We utilize this metric to measure the data-relation between clients and treat it as the weight for model aggregation. In Section \ref{sec:theory}, we give the theoretical intuition behind this metric.

\par In practice, the one-to-one coefficient values are not calculated at one-time since the representation ability of each local model can be deficient at startup rounds. Instead, we use the ranking update to iteratively select relevant helpers. For helpers in list $H_k$, the client ranks their coefficients (compute by Eq. (13)) and attempts to search substitutes for the most $R$ irrelevant helpers in current round. This searching operation is only carried out in the first $F$ rounds. Fig. \ref{Fig.3} visualizes the helper selection procedure with 8 clients, client list of length 5 and 2 helpers to replace in each replacing round. After the corresponding metric is calculated, the {\it{test and set}} stage compares this value with the most irrelevant helper in helper list. The replacement process will be performed if the newly selected model outperforms this helper. At the end of the round, the list of helpers will be sorted to select the $R$ least relevant helpers. In addition, each client updates local-stored helpers' models from server periodically (every $\nu$ rounds), and skip the helpers which has not updated itself.  Besides that, we implement a warmup training stage before the first communication round to avoid selecting helpers with untrained models.

\par Combining the labeling mechanism and helper selection, we derive the summary of our proposed method in Algorithm 1, with the subroutines depicted in Algorithms 2 and 3.

\begin{algorithm}[htp]
 \renewcommand{\algorithmicrequire}{\textbf{Require:}}
 \renewcommand{\algorithmicensure}{\textbf{Output:}}
 \renewcommand{\algorithmicreturn}{\textbf{begin}}
\caption{Uncertainty Minimization Algorithm}
\begin{algorithmic}[1]
\vspace{.2cm}
\Require
Number of clients: $K$; Number of sampled clients in each round: $\bar{K}$; Length of helper list: $M$; Total rounds: $T$; Number of local epoches: $E$; Client set: $\left\{w_{k}\right\}_{k=1}^{K}$; Unlabeled datasets: $\left\{D_{k}^{u}\right\}_{k=1}^{K}$; Labeled datasets: $\left\{D_{k}^{s}\right\}_{k=1}^{K}$; Learning rate: $\mu$; The fequence of helper update: $\nu$; Helper searching rounds: $F$.

\State Initializing each client's model $w_k$ with $w_0$;
\State Initialize each client's helper list with list $H_k:\{w_k\}$;
\For {each client $k \in [K]$}
	\State local warmup training;
\EndFor
\For {${t}=1$ to $T$}
	\If {$t < F$}
		\For {each client $w_k \in \left\{w_{k}\right\}_{k=1}^{K}$}
			\State \Call{ReplaceHelper}{$w_k,H_k$};
		\EndFor
	\EndIf
	\If {$t\% \nu \equiv 0$}
		\For {each client $w_k \in \left\{w_{k}\right\}_{k=1}^{K}$}
			\State \Call{UpdateHelper}{$w_k,H_k$};
		\EndFor
    \EndIf
	\State Sample $\bar{K}$ training client  $\left\{w_{k}\right\}_{k=1}^{\bar{K}}$;
	\For {each client $w_k \in \left\{w_{k}\right\}_{k=1}^{\bar{K}}$}
		\If {helper list $H_k$ is not full}
			\State Randomly select $M-|H_k|$ helper to fill $H_k$;
		\EndIf

		\State Local parameter aggregation:
		\State \hspace{0.5cm} $w_{k} \leftarrow \frac{\sum_{w_{j} \in H_{k}} \operatorname{Corr}(k / j) * w_{j}}{\sum_{w_{j} \in H_{k}} \operatorname{Corr}(k / j)}$;
		\For {$x_{i} \in D_{k}^{u}$}
			\State $\widehat{y}_{i}=\min _{H_{w_{j}}\left(y \mid x_{i}\right)} P_{w_{j}}\left(y \mid x_{i}\right), \forall w_{j} \in H_{k}$;
		\EndFor
      	\For {$e=1$ to $E$}
			\State $w_{k} \leftarrow w_{k}-\mu \nabla_{w_{k}} \mathcal{L}^{s}\left(D_{k}^{s}, w_{k}\right)$;
			\State $w_{k} \leftarrow w_{k}-\mu \nabla_{w_{k}} \sum_{x_{i} \in D_{k}^{u}} K L\left(f_{w_{k}}\left(x_{i}\right), \widehat{y}_{i}\right)$;
		\EndFor
	 \EndFor
	 \State Upload the $\left\{w_{k}\right\}_{k=1}^{\bar{K}}$ to central server.
\EndFor
\State \bf{Training Stop}
\end{algorithmic}
\end{algorithm}

\begin{algorithm}[htp]
 \renewcommand{\algorithmicrequire}{\textbf{Require:}}
 \renewcommand{\algorithmicensure}{\textbf{Output:}}
 \renewcommand{\algorithmicreturn}{\textbf{begin}}
 \renewcommand{\algorithmicreturn}{\textbf{begin}}
\caption{UpdateHelper function for Algorithm 1}
\begin{algorithmic}[1]
\vspace{.2cm}
\Function {UpdateHelper}{$w_k,H_k$}
\State Create an update list $Up: \{\}$;
\For {$w_j  \in H_k$}
	\State Compute $\operatorname{Corr}(k | j)$ with Eq. (13);
	\If {$\operatorname{Corr}(k|j)$ not rank the lowest $R$ {\bf{or}} $j \equiv  k$}
		\State Push $w_j$ to $Up$;
	\EndIf
\EndFor
\For {$w_j \in Up$}
	\State Check update record of $w_j$ in server;
	\If {$w_j$ has been update}
		\State Download updated $w_j$;
	\EndIf
\EndFor
\State $H_{k} \leftarrow Up \cup H_{k}$.
\EndFunction
\State \bf{endfunction}
\end{algorithmic}
\end{algorithm}

\begin{algorithm}[htp]
 \renewcommand{\algorithmicrequire}{\textbf{Require:}}
 \renewcommand{\algorithmicensure}{\textbf{Output:}}
 \renewcommand{\algorithmicreturn}{\textbf{begin}}
 \renewcommand{\algorithmicreturn}{\textbf{begin}}
\caption{ReplaceHelper function for Algorithm 1}
\begin{algorithmic}[1]
\vspace{.2cm}
\Function {ReplaceHelper}{$w_k,H_k$}
\State Create an update list $Re: \{\}$;
\For {$w_j  \in H_k$}
	\State Compute $\operatorname{Corr}(k | j)$ with Eq. (13);
	\If {$\operatorname{Corr}(k|j)$ rank the lowest $R$ {\bf{and}} $j \neq  k$}
		\State Push $w_j$ to $Re$;
	\EndIf
\EndFor
\State Randomly download $R$ models  $\left\{w_{r}\right\}_{r=1}^{R} \cap H_k = \emptyset$ from server;
\For {$w_r \in \left\{w_{r}\right\}_{r=1}^{R}$}
	\State Compute $\operatorname{Corr}(k|r)$ with Eq. (13);
	\State Replace $w_j$ in $Re$ with $w_r$ {\bf{if}}  $\operatorname{Corr}(k|r) > \operatorname{Corr}(k|j)$;
\EndFor
\State $H_{k} \leftarrow Re \cup H_{k}$.
\EndFunction
\State \bf{endfunction}
\end{algorithmic}
\end{algorithm}

\section{Theoretical Analysis}\label{sec:theory}
\par In this section, we utilize the theories from domain adaptation [39] to prove that the UM-pFSSL can improve the generalization bound of the aggregated model.

\par Consider the FL setting with $K$ clients, each client maintains the local domain $\mathcal{T}_{k}:\left\langle\mathcal{D}_{k}, f_{k}^{*}\right\rangle$ with $D_k=D_k^s \cup D_k^u$, where $D_k^s$ and $D_k^u$ are samples from personalized data distribution $P_{X,Y}^k$ and its marginal distribution $P_X^k$, respectively. $f_k^{*}$ denotes the labeling function that maps data $x \sim P_X^k$ to its true label. We treat client $k$ as the target domain and its helpers as the source domains in domain adaptation. Then the source domains can be defined as $\mathcal{T}_{j}:\left\langle\mathcal{D}_{j}, f_{j}^{*}\right\rangle_{j \in[M]}$, where $D_j$ is the local data of helper unknown to client $k$ and $M$ is the length of the helper list.

\begin{lemma}[{[40]}]
Suppose source $\mathcal{T}_j$ has m labeled instances. $\mathcal{H}$ denotes a hypothesis space of VC-dimension $d$. Then for any $\delta \in (0,1)$, with probability at least $1-\delta$, for every $h \in \mathcal{H}$:
\begin{equation}
\epsilon_{k}(h) \leq \epsilon_{j}(h)+\frac{1}{2} d_{\mathcal{H} \Delta \mathcal{H}}\left(P_{X}^{k}, P_{X}^{j}\right)+\lambda_{j},
\end{equation}
where $\lambda_{j}=\min _{h \in \mathcal{H}} \epsilon_{k}(h)+\epsilon_{j}(h)$; $\epsilon_{k}(h)$ denotes the ideal risk of hypothesis $h$ on domain $\mathcal{T}_k$. $d_{\mathcal{H} \Delta \mathcal{H}}\left(P_{X}^{k}, P_{X}^{j}\right)$ is the divergence of two data distributions measured in a symmetric-difference hypothesis space.
\end{lemma}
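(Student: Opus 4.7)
The plan is to follow the standard chain of triangle inequalities for $\mathcal{H}\Delta\mathcal{H}$-based domain adaptation bounds (Ben-David et al.), linking the target risk on client $k$ to the source risk on client $j$ through an oracle hypothesis realizing $\lambda_j$. The setup naturally decomposes into three ingredients: (i) an oracle comparator that introduces $\lambda_j$, (ii) a transfer step that swaps the measuring distribution from $P_X^k$ to $P_X^j$ and pays the $\mathcal{H}\Delta\mathcal{H}$-divergence, and (iii) a second triangle inequality that converts hypothesis--hypothesis disagreement back into a per-hypothesis risk.

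Concretely, first I would introduce the comparator $h^{\ast}=\arg\min_{h\in\mathcal{H}}\bigl[\epsilon_k(h)+\epsilon_j(h)\bigr]$, so that $\lambda_j=\epsilon_k(h^{\ast})+\epsilon_j(h^{\ast})$. Writing $\epsilon_k(h_1,h_2):=\Pr_{x\sim P_X^k}[h_1(x)\neq h_2(x)]$ for the disagreement pseudometric, the identities $\epsilon_k(h)=\epsilon_k(h,f_k^{\ast})$ and $\epsilon_k(h^{\ast})=\epsilon_k(h^{\ast},f_k^{\ast})$ combined with the triangle inequality for a 0--1 metric give
\begin{equation}
\epsilon_k(h)\;\leq\;\epsilon_k(h,h^{\ast})+\epsilon_k(h^{\ast}).
\end{equation}
Next comes the transfer step, which is the crux of the proof: because both $h$ and $h^{\ast}$ belong to $\mathcal{H}$, the set $\{x:h(x)\neq h^{\ast}(x)\}$ lies in the symmetric-difference class $\mathcal{H}\Delta\mathcal{H}$, so by the definition $d_{\mathcal{H}\Delta\mathcal{H}}(P_X^k,P_X^j)=2\sup_{A\in\mathcal{H}\Delta\mathcal{H}}|P_X^k(A)-P_X^j(A)|$ we immediately get $\epsilon_k(h,h^{\ast})\leq\epsilon_j(h,h^{\ast})+\tfrac{1}{2}d_{\mathcal{H}\Delta\mathcal{H}}(P_X^k,P_X^j)$. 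One more triangle inequality yields $\epsilon_j(h,h^{\ast})\leq\epsilon_j(h)+\epsilon_j(h^{\ast})$, and chaining everything gives exactly
\begin{equation}
\epsilon_k(h)\;\leq\;\epsilon_j(h)+\tfrac{1}{2}d_{\mathcal{H}\Delta\mathcal{H}}(P_X^k,P_X^j)+\underbrace{\epsilon_k(h^{\ast})+\epsilon_j(h^{\ast})}_{=\,\lambda_j},
\end{equation}
which is the claim. Each inequality above is tight up to the choice of $h^{\ast}$, so there is no slack to optimize away.

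The main obstacle I anticipate is reconciling the high-probability qualifier ``with probability at least $1-\delta$'' and the dependence on the VC-dimension $d$ and sample size $m$ in the hypothesis with the fact that the right-hand side, as written, contains only population-level quantities. The deterministic core of the argument is the three-step chain above; the $(d,m,\delta)$ apparatus is only needed if one replaces the population divergence $d_{\mathcal{H}\Delta\mathcal{H}}(P_X^k,P_X^j)$ by its empirical counterpart, which costs an additional uniform-convergence term of order $\sqrt{(d\log m)/m}$ via a VC argument on $\mathcal{H}\Delta\mathcal{H}$ (whose VC-dimension is at most $2d$). I would therefore present the proof deterministically for population quantities and then briefly note that invoking the standard VC concentration bound of Ben-David et al.\ on the empirical $\mathcal{H}\Delta\mathcal{H}$-distance transfers the inequality to the empirical regime, absorbing the residual term into a slightly enlarged $\lambda_j$; that concentration step is the only non-algebraic piece and is where the $\delta$ and $d$ enter.
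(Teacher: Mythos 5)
Your proof is correct, and there is nothing in the paper to compare it against: Lemma~1 is imported verbatim from reference [40] (Blitzer et al.\ / Ben-David et al.) and the paper supplies no proof of it, only proofs of the downstream Corollary~1 and Theorem~1. Your three-step chain --- oracle comparator $h^{\ast}$ realizing $\lambda_j$, triangle inequality for the $0$--$1$ disagreement pseudometric, and the transfer step bounding $\epsilon_k(h,h^{\ast})-\epsilon_j(h,h^{\ast})$ by $\tfrac{1}{2}d_{\mathcal{H}\Delta\mathcal{H}}(P_X^k,P_X^j)$ --- is exactly the canonical argument in that source. Your diagnosis of the ``with probability at least $1-\delta$'' qualifier is also right: as stated, the bound involves only population quantities, so the $(d,m,\delta)$ apparatus is vestigial here and only becomes load-bearing when the divergence is replaced by its empirical estimate, which is precisely what the paper does later in Eq.~(23) inside the proof of Theorem~1.
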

\par This lemma implies that, the performance of hypothesis $h$ (which is well-behaved on $P_{X}^{j}$) on distribution $P_{X}^{k}$ depends on the divergence of the two distirbution $d_{\mathcal{H} \Delta \mathcal{H}}\left(P_{X}^{k}, P_{X}^{j}\right)$. In other words, when the divergence between the two datasets is small, a model that performs well on one dataset can also achieve good results on the other dataset.
\begin{corollary}
 Define $\hat{h}=\frac{1}{M} \sum_{j=1}^{M} h_{j}$  as the aggregated hypothesis of helpers (include $h_k$ itself), Then for any $\delta \in (0,1)$, with probability at least $1-\delta$, for every $h \in \mathcal{H}$:
\begin{equation}
\epsilon_{k}(\hat{h}) \leq \frac{1}{M} \sum_{j=1}^{M} \epsilon_{j}\left(h_{j}\right)+\frac{1}{2 M} \sum_{j=1}^{M}\left(d_{\mathcal{H} \Delta \mathcal{H}}\left(P_{X}^{k}, P_{X}^{j}\right)+2 \lambda_{j}\right).
\end{equation}
\end{corollary}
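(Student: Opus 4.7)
The plan is to combine Jensen's inequality with a termwise application of Lemma 1. First I would exploit convexity of the target risk in the hypothesis: for the KL/cross-entropy losses used in the paper (which are convex in the predictive distribution), Jensen's inequality applied to $\hat h=\frac{1}{M}\sum_{j=1}^{M}h_{j}$ yields
\begin{equation}
\epsilon_{k}(\hat h)\;\leq\;\frac{1}{M}\sum_{j=1}^{M}\epsilon_{k}(h_{j}).
\end{equation}
This is the step that turns a statement about a single hypothesis (Lemma 1) into one about the aggregate.

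Next I would apply Lemma 1 to each constituent $h_{j}\in\mathcal{H}$ in turn, treating client $k$ as the target and client $j$ as the source. Each application gives
\begin{equation}
\epsilon_{k}(h_{j})\;\leq\;\epsilon_{j}(h_{j})+\tfrac{1}{2}\,d_{\mathcal{H}\Delta\mathcal{H}}(P_{X}^{k},P_{X}^{j})+\lambda_{j}
\end{equation}
with probability at least $1-\delta/M$; a union bound across the $M$ helpers then makes all $M$ inequalities hold simultaneously with probability at least $1-\delta$. Substituting back into the Jensen bound and averaging yields the claimed inequality directly, with the factor $\tfrac{1}{2M}$ appearing naturally from dividing the sum of divergences by $M$ and absorbing the $\tfrac{1}{2}$.

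The main obstacle, in my view, is justifying the convexity step cleanly. For the 0--1 loss the risk is not convex in $h$, so one has to commit to a convex surrogate and interpret $\hat h$ as an output-space average of probability vectors (which is exactly how the paper's aggregation and KL-based distillation already operate); the statement of Corollary~1 should then be read in this surrogate-risk sense. A secondary but minor subtlety is the VC-based confidence bound: strictly speaking Lemma~1 bounds a uniform deviation over $\mathcal{H}$, so a single invocation already covers every $h_{j}$ at once, and the union bound above is a conservative alternative if one prefers to apply the lemma pointwise. Beyond these points, the argument is just summation and division, and no further machinery is needed.
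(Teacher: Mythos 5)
Your argument matches the paper's proof essentially step for step: Jensen's inequality via convexity of the risk in the averaged hypothesis, a termwise application of Lemma~1 with confidence $\delta/M$ per source, and a union bound over the $M$ helpers. Your added remarks on needing a convex surrogate risk and on the uniform-over-$\mathcal{H}$ nature of the lemma are reasonable clarifications the paper glosses over, but they do not change the route.
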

\begin{proof}
We follow the method from [36]. Based on Lemma 1, when client $j$ is the source and client $k$ is the target, then for $\forall \frac{\delta}{M}>0$, with probability $1-\frac{\delta}{M}$:
\begin{equation}
\epsilon_{k}\left(h_{j}\right) \leq \epsilon_{j}\left(h_{j}\right)+\frac{1}{2} d_{\mathcal{H} \Delta \mathcal{H}}\left(P_{X}^{k}, P_{X}^{j}\right)+\lambda_{j}.
\end{equation}
\par The convexity of empirical estimation makes the Jenson inequality hold:
\begin{equation}
\epsilon_{k}(\hat{h})=\epsilon_{k}\left(\frac{1}{M} \sum_{j=1}^{M} h_{j}\right) \leq \frac{1}{M} \sum_{j=1}^{M} \epsilon_{k}\left(h_{j}\right),
\end{equation}
Thus,
\begin{equation}
\begin{aligned}
& \operatorname{Pr}\Bigg\{\epsilon_{k}(\hat{h})>\frac{1}{M} \sum_{j=1}^{M}\left(\epsilon_{j}\left(h_{j}\right)+\frac{1}{2} d_{\mathcal{H} \Delta \mathcal{H}}\left(P_{X}^{k}, P_{X}^{j}\right)+\lambda_{j}\right)\Bigg\} \\
&\begin{aligned}
\leq \operatorname{Pr}\Bigg\{ \frac{1}{M} & \sum_{j=1}^{M} \epsilon_{k}\left(h_{j}\right) > \\
& \frac{1}{M} \sum_{j=1}^{M}\left(\epsilon_{j}\left(h_{j}\right)+\frac{1}{2} d_{\mathcal{H} \Delta \mathcal{H}}\left(P_{X}^{k}, P_{X}^{j}\right)+\lambda_{j}\right)\Bigg\}
\end{aligned}
\\
&\begin{aligned}
\leq \operatorname{Pr}\Bigg\{\bigvee_{j \in[M]} \epsilon_{k}\left(h_{j}\right)>\epsilon_{j}\left(h_{j}\right)+\frac{1}{2} d_{\mathcal{H} \Delta \mathcal{H}}\left(P_{X}^{k}, P_{X}^{j}\right)+\lambda_{j}\Bigg\}
\end{aligned}
\\
&\begin{aligned}
\leq  \sum_{j=1}^{M} \frac{\delta}{M}=\delta.
\end{aligned}
\end{aligned}
\end{equation}
\end{proof}

\begin{theorem}
For a FL system with $K$ clients, select $M$ helper clients to build aggregated hypothesis $\hat{h}=\frac{1}{M} \sum_{j=1}^{M} h_{j}$. There are two generalization bounds for applying $\hat{h}$ to domain $\mathcal{T}_k$:
\begin{equation}
\begin{aligned}
\epsilon_{k}(\hat{h}) \leq & \frac{1}{M} \sum_{j=1}^{M}\left(\epsilon_{j}\left(h_{j}\right)+\frac{1}{2} \hat{d}_{\mathcal{H} \Delta \mathcal{H}}\left(\mathcal{D}_{k}, \mathcal{D}_{j}\right)+\lambda_{j}\right)\\
& +4 \sqrt{\frac{2 d \log \left(2em^{*}\right)+\log \left(\frac{4 M}{\delta}\right)}{m^{*}}},
\end{aligned}
\end{equation}
\begin{equation}
\epsilon_{k}(\hat{h}) \leq \frac{1}{M} \sum_{j=1}^{M} \hat{\epsilon}_{k}\left(h_{j}\right)+\sqrt{\frac{4 d \log \left(\frac{2 e m}{d}\right)+\log \left(\frac{4 M}{\delta}\right)}{m}},
\end{equation}
where $\hat{\epsilon}_j$ and $\hat{\epsilon}_k$ are empirical risk on domain $\mathcal{T}_k$ and $\mathcal{T}_j$, respectively. $m$ denotes the sample size of labeled data. $m^{*}$ is the size of $D_k$ and $D_j$ and $e$ is the natural constant. $\hat{d}_{\mathcal{H} \Delta \mathcal{H}}\left(\mathcal{D}_{k}, \mathcal{D}_{j}\right)$ is the empirical estimate of $d_{\mathcal{H} \Delta \mathcal{H}}\left(P_{X}^{k}, P_{X}^{j}\right)$.
\end{theorem}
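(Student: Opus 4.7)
The plan is to derive both inequalities by combining Corollary 1 (which already furnishes the population-level bound) with standard VC-type uniform convergence arguments, then applying a union bound over the $M$ helper clients.

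For the first bound (Eq.\ 20), I would start from the statement of Corollary 1, which yields
\begin{equation*}
\epsilon_{k}(\hat{h}) \leq \frac{1}{M}\sum_{j=1}^{M}\epsilon_{j}(h_{j}) + \frac{1}{2M}\sum_{j=1}^{M}\bigl(d_{\mathcal{H}\Delta\mathcal{H}}(P_{X}^{k},P_{X}^{j}) + 2\lambda_{j}\bigr).
\end{equation*}
The only obstacle is the population divergence $d_{\mathcal{H}\Delta\mathcal{H}}(P_{X}^{k},P_{X}^{j})$, which is not observable. The standard device of Ben-David et al.\ says that since $\mathcal{H}$ has VC-dimension $d$, the symmetric-difference class $\mathcal{H}\Delta\mathcal{H}$ has VC-dimension at most $2d$, so by Vapnik-Chervonenkis uniform convergence, with probability at least $1-\delta/M$,
\begin{equation*}
d_{\mathcal{H}\Delta\mathcal{H}}(P_{X}^{k},P_{X}^{j}) \leq \hat{d}_{\mathcal{H}\Delta\mathcal{H}}(\mathcal{D}_{k},\mathcal{D}_{j}) + 4\sqrt{\frac{2d\log(2em^{*}) + \log(4M/\delta)}{m^{*}}},
\end{equation*}
where the factor $4$ absorbs the $\tfrac{1}{2}$ in front of the divergence after taking the union bound over the two samples used in the empirical $\mathcal{H}\Delta\mathcal{H}$-distance. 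Applying this inequality for every $j\in[M]$ and taking a union bound over the $M$ helpers (replacing $\delta$ by $\delta/M$ in each individual bound so the overall event holds with probability $\geq 1-\delta$) substitutes $\hat{d}_{\mathcal{H}\Delta\mathcal{H}}$ for $d_{\mathcal{H}\Delta\mathcal{H}}$ inside the average and attaches the stated residual term, giving Eq.\ (20).

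For the second bound (Eq.\ 21), I would instead pull the target risk inside the average using Jensen's inequality on the convex loss, as already observed in Eq.\ (18):
\begin{equation*}
\epsilon_{k}(\hat{h}) \leq \frac{1}{M}\sum_{j=1}^{M}\epsilon_{k}(h_{j}).
\end{equation*}
Now each $\epsilon_{k}(h_{j})$ is the true risk of a fixed hypothesis on domain $\mathcal{T}_{k}$, and one can evaluate $h_{j}$ against the $m$ labeled samples held by client $k$. The classical VC bound (Vapnik) yields, with probability $1-\delta/M$,
\begin{equation*}
\epsilon_{k}(h_{j}) \leq \hat{\epsilon}_{k}(h_{j}) + \sqrt{\frac{4d\log(2em/d) + \log(4M/\delta)}{m}}.
\end{equation*}
Averaging over $j$ and taking a union bound over the $M$ helpers (the $\log(4M/\delta)$ term already reflects this) gives Eq.\ (21).

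The main obstacle is bookkeeping rather than any deep step: matching the constants of the final square-root terms to those in the theorem statement requires being careful about (i) the factor of $2$ coming from passing from $\mathcal{H}$ to $\mathcal{H}\Delta\mathcal{H}$, (ii) the two-sample nature of the empirical divergence (which produces the factor $4$ in the first bound but not the second), and (iii) a union bound split $\delta\mapsto\delta/M$ applied consistently across all $M$ helpers. Everything else is immediate from Corollary 1, Jensen's inequality, and off-the-shelf VC uniform convergence; no domain-specific argument is needed.
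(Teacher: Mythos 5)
Your proposal is correct and follows essentially the same route as the paper: the first bound is obtained by chaining Corollary 1 with the Ben-David/Blitzer empirical estimate of $d_{\mathcal{H}\Delta\mathcal{H}}$, and the second by Jensen's inequality followed by the classical VC bound, with a union bound over the $M$ helpers supplying the $\log(4M/\delta)$ factor. Your treatment is in fact slightly more explicit than the paper's about where the $\delta\mapsto\delta/M$ split enters, but the argument is the same.
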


\begin{proof}
Based on Vapnik-Chervonenkis theory:
\begin{equation}
\epsilon_{k}\left(h_{j}\right) \leq \hat{\epsilon}_{k}\left(h_{j}\right)+\sqrt{\frac{4 d \log \left(\frac{2 e m}{d}\right)+\log \left(\frac{4}{\delta}\right)}{m}}.
\end{equation}
\par Further, we adopt the derivation in [40]:
\begin{equation}
\begin{aligned}
\frac{1}{2} d_{\mathcal{H} \Delta \mathcal{H}}\left(P_{X}^{k}, P_{X}^{j}\right) & \leq \frac{1}{2} \hat{d}_{\mathcal{H} \Delta \mathcal{H}}\left(\mathcal{D}_{k}, \mathcal{D}_{j}\right) \\
& +4 \sqrt{\frac{2 d \log \left(2 em^{*}\right)+\log \left(\frac{4 M}{\delta}\right)}{m^{*}}}.
\end{aligned}
\end{equation}
\par According to Corollary 1:
\begin{equation}
\begin{aligned}
\epsilon_{k}(\hat{h}) \leq & \frac{1}{M} \sum_{j=1}^{M} \epsilon_{k}\left(h_{j}\right) \\
\leq & \frac{1}{M} \sum_{j=1}^{M} \epsilon_{j}\left(h_{j}\right)\\
& +\frac{1}{2 M} \sum_{j=1}^{M}\left(d_{\mathcal{H} \Delta \mathcal{H}}\left(P_{X}^{k}, P_{X}^{j}\right)+2 \lambda_{j}\right).
\end{aligned}
\end{equation}
\par Thus, one can derive
\begin{equation}
\begin{aligned}
\epsilon_{k}(\hat{h})  \leq & \frac{1}{M} \sum_{j=1}^{M} \epsilon_{k}\left(h_{j}\right) \\
\leq &  \frac{1}{M} \sum_{j=1}^{M} \epsilon_{j}\left(h_{j}\right)+\frac{1}{2 M} \sum_{j=1}^{M}\left(d_{\mathcal{H} \Delta \mathcal{H}}\left(P_{X}^{k}, P_{X}^{j}\right)+2 \lambda_{j}\right) \\
\leq & \frac{1}{M} \sum_{j=1}^{M}\left(\epsilon_{j}\left(h_{j}\right)\right.\left.+\frac{1}{2} \hat{d}_{\mathcal{H} \Delta \mathcal{H}}\left(\mathcal{D}_{k}, \mathcal{D}_{j}\right)+\lambda_{j}\right) \\
& +4 \sqrt{\frac{2 d \log \left(2 e m^{*}\right)+\log \left(\frac{4 M}{\delta}\right)}{m^{*}}},
\end{aligned}
\end{equation}
and
\begin{equation}
\begin{aligned}
\epsilon_{k}(\hat{h}) & \leq \frac{1}{M} \sum_{j=1}^{M} \epsilon_{k}\left(h_{j}\right) \\
& \leq \frac{1}{M} \sum_{j=1}^{M} \hat{\epsilon}_{k}\left(h_{j}\right)+ \sqrt{\frac{4d \log \left( \frac{2em}{d}\right)+\log \left(\frac{4 M}{\delta}\right)}{m}}.
\end{aligned}
\end{equation}
Therefore, the formulations (25) and (26) represent the two bounds in Theorem 1.
\end{proof}
\par Then we have two generalization bounds for aggregated hypothesis: Eqs. (20) and (21), we proximate the $\hat{d}_{\mathcal{H} \Delta \mathcal{H}}\left(\mathcal{D}_{k}, \mathcal{D}_{j}\right)$ with $\bar{H}\left(\mathcal{D}_{k}|\mathcal{D}_{j}\right)$ in (14) based on the intuitions:
\begin{itemize}
 	\item[1)] If $D_k$ has no knowledge related to $D_j$, then $\bar{H}\left(\mathcal{D}_{k}|\mathcal{D}_{j}\right)=0$.
	\item[2)] If $D_k=D_j$, then $\bar{H}\left(\mathcal{D}_{k}|\mathcal{D}_{j}\right)=1$.
     \item[3)] If $D_k$  is partial related to $D_j$, then $\bar{H}\left(\mathcal{D}_{k}|\mathcal{D}_{j}\right)$ locates to interval $(0,1)$. Larger $\bar{H}\left(\mathcal{D}_{k}|\mathcal{D}_{j}\right)$ means stronger relation between $D_k$ and $D_j$.
\end{itemize}
\par For (21), when the labeled dataset is large, the generalization risk is bound by $O\left(\sqrt{\frac{\mathrm{d}}{m} \log \left(\frac{m}{d}\right)-\frac{1}{m} \log (\delta)}\right) \approx 0$, thus the empirical risk $\hat{\epsilon}_k(h_j)$ is sufficient to measure the generalization bound of $\hat{h}$. When $m$ is small (close to 0), then second term of Eq. (21) could be too large thus eliminate the impact of empirical risk, in this scenario, the Eq. (20) is more suitable to evaluate the generalization bound. These intuitions encourage us to design a metric to measure data-relation between clients with both their labeled and unlabeled data, which is formulated as Eq. (13).

\section{Experiment Results and  Evaluation}\label{sec:result}
\par In this section, the experiments are performed to compare the performance of our proposed scheme with other key related works under the heterogeneous federated semi-supervised learning context. The results demonstrate the superiority of our method in several benchmarks across clients.
\subsection{Experimental Setup}
\par \textbf{Datasets and Non-IID setting:}  We mainly evaluate our proposed scheme with two different datasets, which are Fashion-MNIST\footnote{https://github.com/zalandoresearch/fashion-mnist} and CIFAR-10\footnote{https://www.cs.toronto.edu/\%7ekriz/cifar.html}. CIFAR-10 consists of $32\times32$ RGB images while Fashion-MNIST consists of $28\times28$ grayscale images. All of the datasets are labeled with 10 classes. We randomly split the datasets with 70\%, 10\% and 20\% for training, validation and testing, respectively. We adopt the same Non-IID setting from existing work [41], which utilizes a Dirichlet distribution ${Dir}_K(\alpha)$ to assign the training data and testing data to different clients. In detail, for each class c, we sample a $K$-dimensional variable $p_c$ from ${Dir}_K(\alpha)$ which is used to allocate different proportions of samples of class $c$ to $K$ clients. The parameter $\alpha$ represents the heterogeneous degree of potential data distributions of clients, with smaller $\alpha$ indicates higher heterogeneity. For training data on each client, we further sample a $2$-dimensional variable $p_k:=(p_k^s,p_k^u)$ from ${Dir}_2(0.5)$ to split the datasets with labeled-to-unlabeled ratio $p_k^s/p_k^u$. In our experiments, the testing data and validation data on each client share the same label distribution and labeled-to-unlabeled ratio with training data, while these settings between clients are potentially different. All results presented in this section are evaluated on the personal test sets.

\par \textbf{Model setting:} We mainly use the lightweight neural network structure ResNet-9 in our experiments. To integrate MC-dropout to this structure, we append dropout layers (with dropout rate 0.5) after last two batch normalization layers and all fully connected layers except output layer.

\par \textbf{Implementation and configuration:} We set the size of helper list, number of replaced helpers and the number of rounds to implement searching to 5, 2 and 30 respectively ($M=5,R=2,F=30$). In each round, we sample 10\% from engaged 100 clients to implement training procedure (sample rate $\tau=0.1$). The training batch for each client is set to 64, and we utilize stochastic gradient descent (SGD) optimizer with learning rate 1e-4 and momentum 0.9. The total number of training round is set to 200 and the number of local training epochs is set to 5. The helpers' models are set to update every 10 rounds ($\nu=10$).

\par All the experiments are implemented by Pytorch and trained on Nvidia GeForce RTX 3090 GPU with 24 GB GDDR6X memory.

\par \textbf{Compared methods:} To validate our method, we compare the performance of UM-pFSSL with several key related methods for Non-IID federated learning: 1) FedProx-SL: FedProx [7] with fully labeled data, 2) FedBN-SL: FedBN [17] with fully labeled data, 3) FixProx: FedProx with semi-supervised pseudo-labeling method FixMatch [31] and 4) FixBN: FedBN with FixMatch. Further, we take the FedAvg-SL and FixAvg (FedAvg with FixMatch) as the baseline methods. Besides the Non-IID methods, we utilize personalized federated learning methods combined with FixMatch for comparison: FedPer [9], LG-FedAvg [22] and pFedMe [11]. The implementations of compared methods are based on the open-source codes of original papers.

\subsection{Results and Comparisons}
\par In this subsection, we compare the results of UM-pFSSL with Non-IID schemes and personalized federated learning methods. We run each experiment 20 times and take the average as final result.
\begin{table}[htbp]
   \centering
   \caption{THE COMPARISION OF BEST TEST ACCURACY ON Fashion-MNIST WITH DIFFERENT $\alpha$.}
   \setlength{\tabcolsep}{3mm}{\begin{tabular}{c|ccccc}
      \toprule
      ~ & \textbf{Algorithms} & $\alpha$=0.5  & $\alpha$=1  & $\alpha$=5  & $\alpha$=10\\
      \midrule
      \multirow{3}*{\rotatebox{90}{\textbf{Sup.}}} & FedAvg-SL & 81.71 & 82.09	& 81.98 & 82.32   \\
      ~ & FedProx-SL & 81.83 & 83.01 & 82.65 & 83.63   \\
      ~ & FedBN-SL & 75.79 & 79.91 & 82.85 & 82.18  \\
     \midrule
	  \multirow{7}*{\rotatebox{90}{\textbf{Semi-Sup.}}} & FixAvg & 75.94 & 76.78 & 78.68 & 79.59     \\
      ~ & FixProx & 76.77 & 77.46 & 77.64 & 78.00    \\
      ~ & FixBN  & 76.40 & 79.22 & 80.48 & \textbf{81.88} \\
     \cmidrule{2-6}
      ~ & FedPer & 60.65 & 50.90 & 37.89 & 38.05     \\
	  ~ & LG-FedAvg & 74.64 & 78.49 & 75.39 & 66.54\\
      ~ & pFedMe & 44.53 & 49.01 & 47.38 & 45.39\\
     \cmidrule{2-6}
     ~ & UM-pFSSL &  \textbf{79.00} & \textbf{80.93} & \textbf{81.16} & 81.49\\
      \bottomrule
   \end{tabular}}
   \label{Tab.1}
	
\end{table}

\begin{table}[htbp]
   \centering
   \caption{THE COMPARISION OF BEST TEST ACCURACY ON CIFAR-10 WITH DIFFERENT $\alpha$.}
   \setlength{\tabcolsep}{3mm}{\begin{tabular}{c|ccccc}
      \toprule
      ~ & \textbf{Algorithms}              & $\alpha$=0.5  & $\alpha$=1  & $\alpha$=5  & $\alpha$=10\\
      \midrule
      \multirow{3}*{\rotatebox{90}{\textbf{Sup.}}} & FedAvg-SL & 48.56 & 48.25 & 54.53 & 55.33   \\
      ~ & FedProx-SL & 51.62 & 53.47 & 55.06 & 55.32   \\
      ~ & FedBN-SL & 51.54 & 52.47 & 54.32 & 55.61  \\
     \midrule
	  \multirow{7}*{\rotatebox{90}{\textbf{Semi-Sup.}}} & FixAvg & 46.41 & 46.79 & 52.46 & 53.04  \\
      ~ & FixProx &49.37 & 49.81 & 48.61 & \textbf{54.65}    \\
      ~ & FixBN  & 47.54 & 50.28 & 51.52 & 51.93\\
     \cmidrule{2-6}
      ~ & FedPer & 42.16 & 30.02 & 18.94 & 18.68     \\
	  ~ & LG-FedAvg & 43.41	 & 41.93 & 38.23 & 38.37 \\
      ~ & pFedMe & 25.70 & 27.38 & 22.56 & 18.00\\
     \cmidrule{2-6}
     ~ & UM-pFSSL & \textbf{51.14} & \textbf{52.24} & \textbf{52.83} & 54.03\\
      \bottomrule
   \end{tabular}}
   \label{Tab.2}
\end{table}

\begin{figure*}[htb]
   \centering
   \subfigure[$\alpha=0.5$]{\includegraphics[width=1.7in]{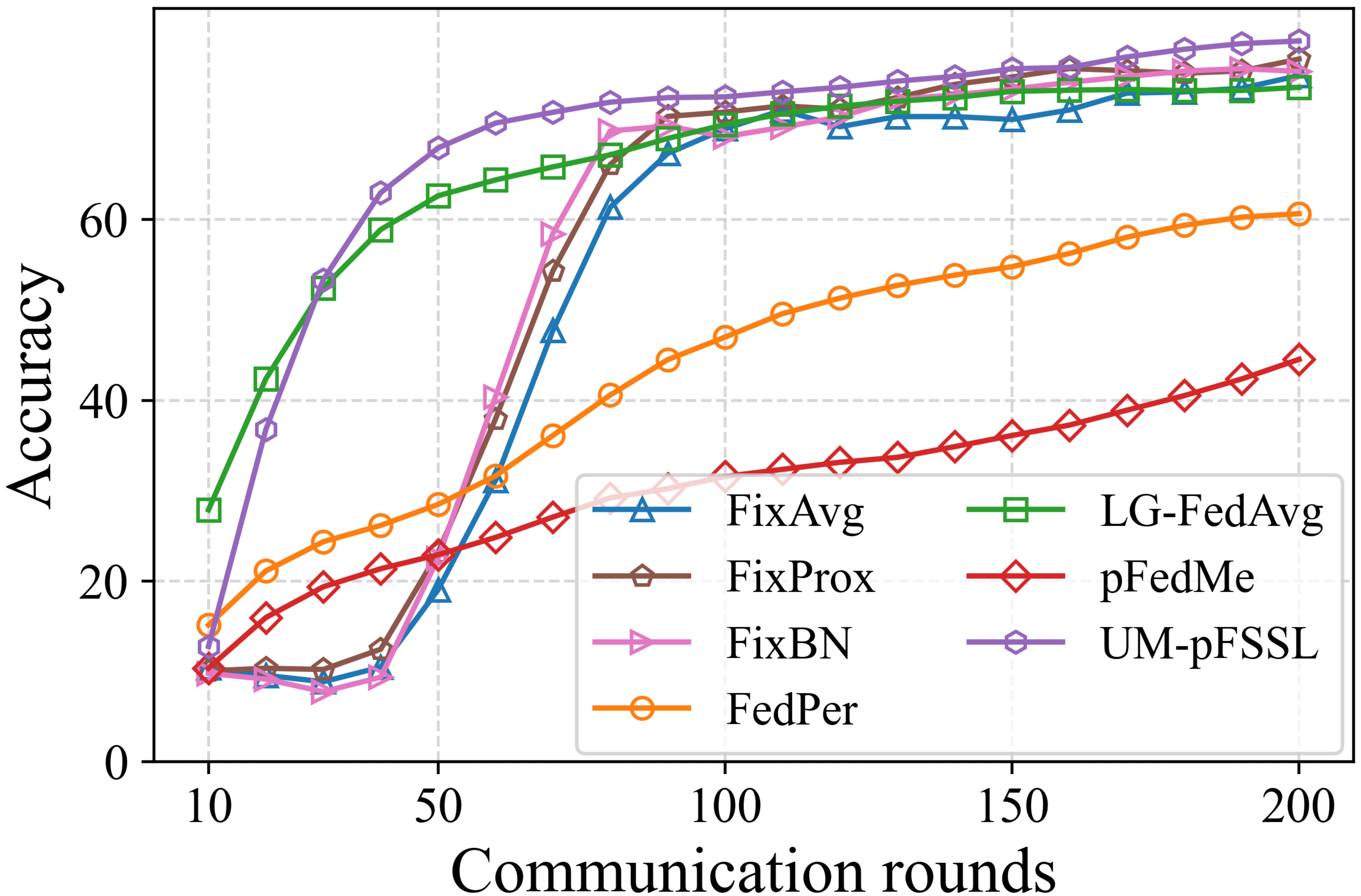}}
   \subfigure[$\alpha=1$]{\includegraphics[width=1.7in]{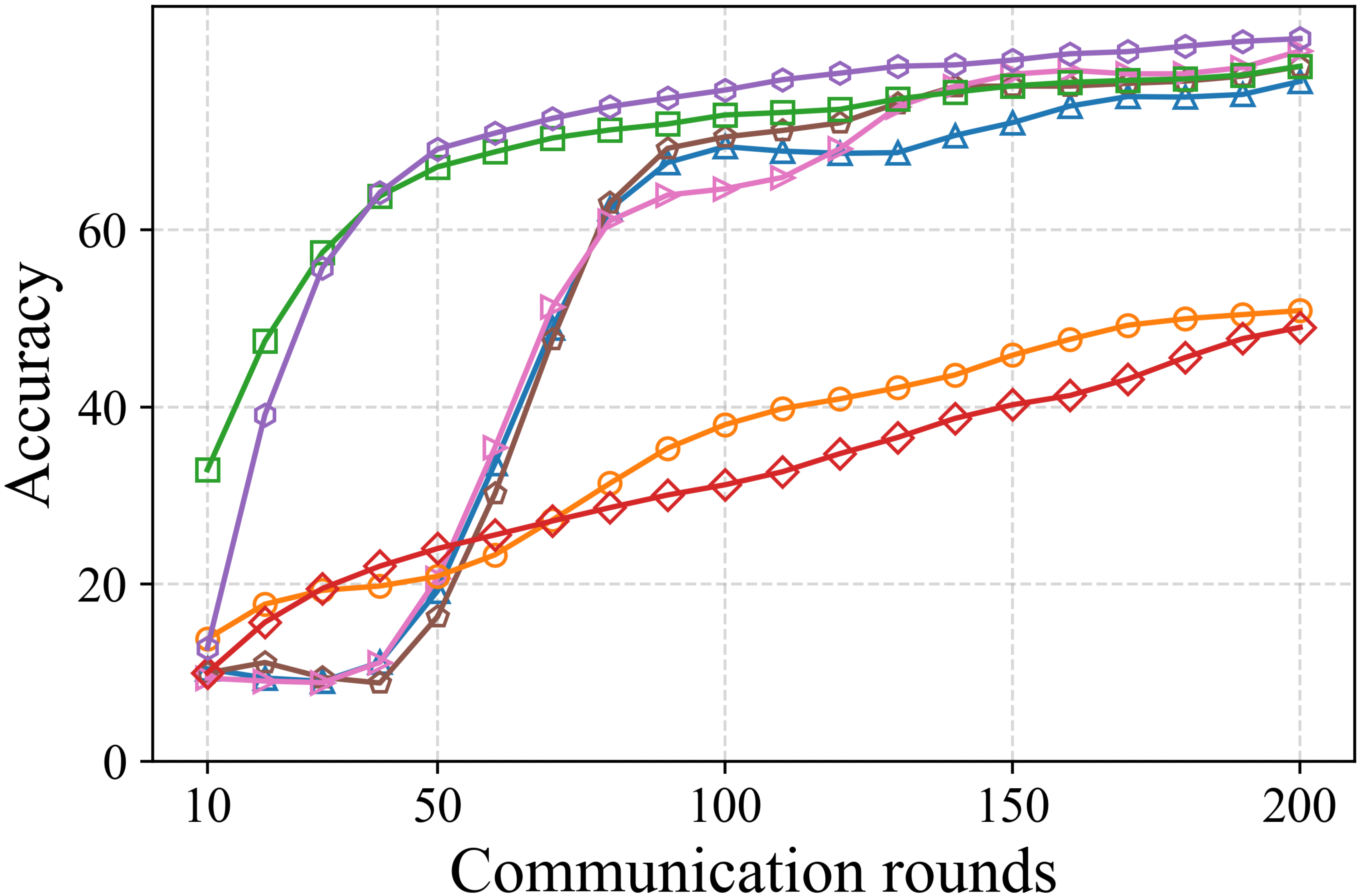}}
   \subfigure[$\alpha=5$]{\includegraphics[width=1.7in]{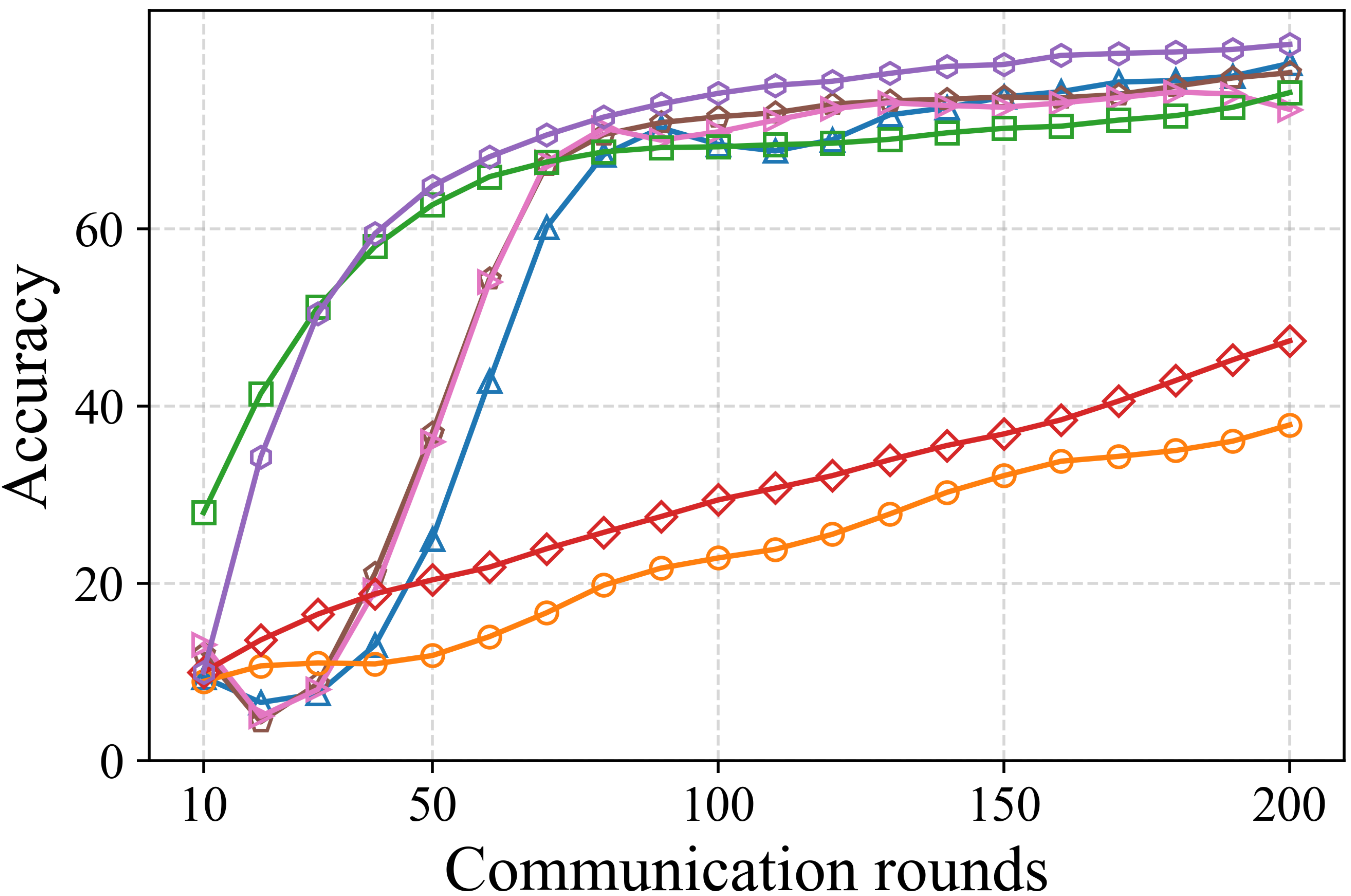}}
   \subfigure[$\alpha=10$]{\includegraphics[width=1.7in]{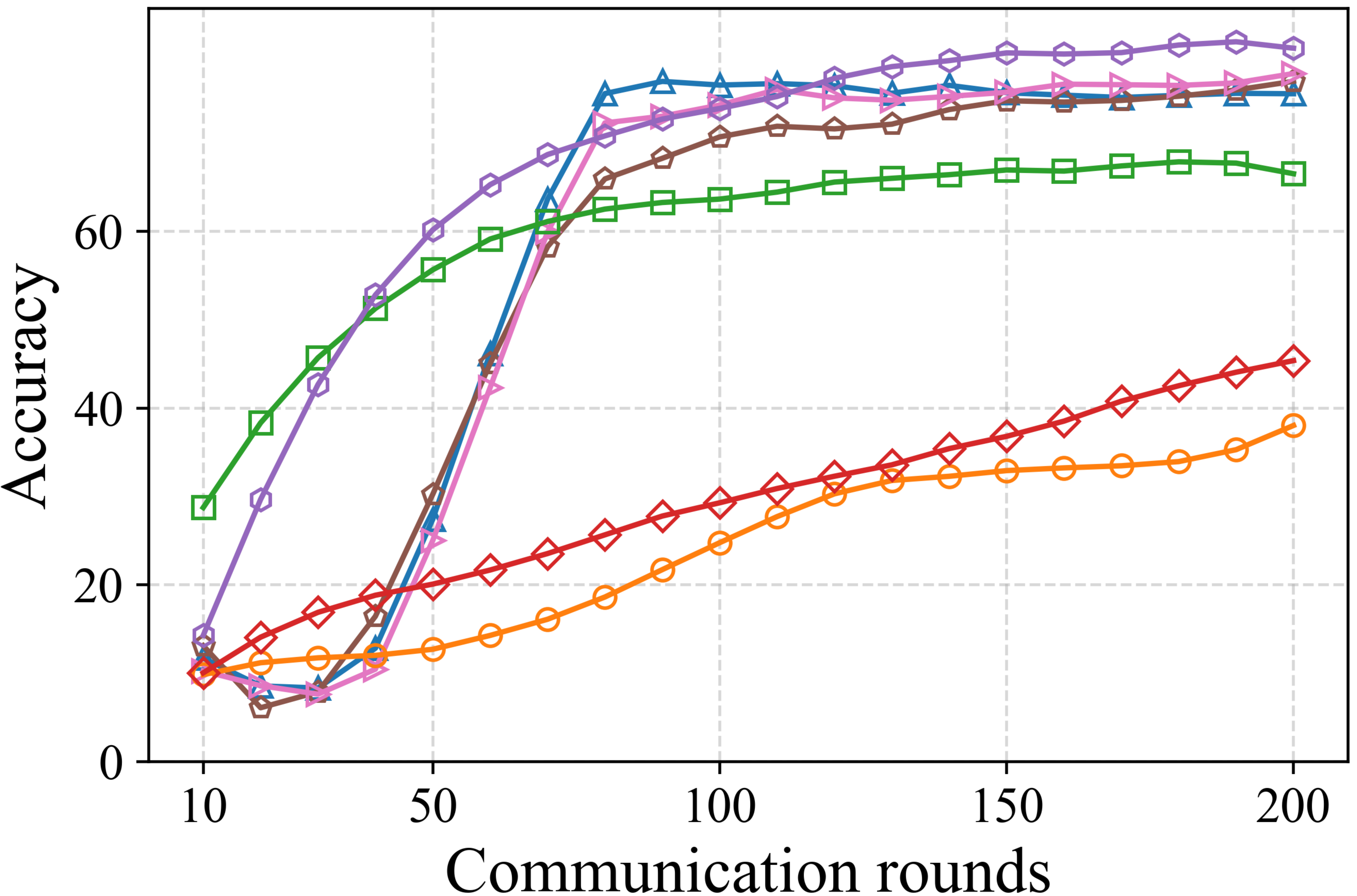}}
   \caption{Mean validation accuracy comparison of different methods on Fashion-MNIST.}
   \label{Fig.4}
\end{figure*}

\begin{figure*}[htb]
   \centering
   \subfigure[$\alpha=0.5$]{\includegraphics[width=1.7in]{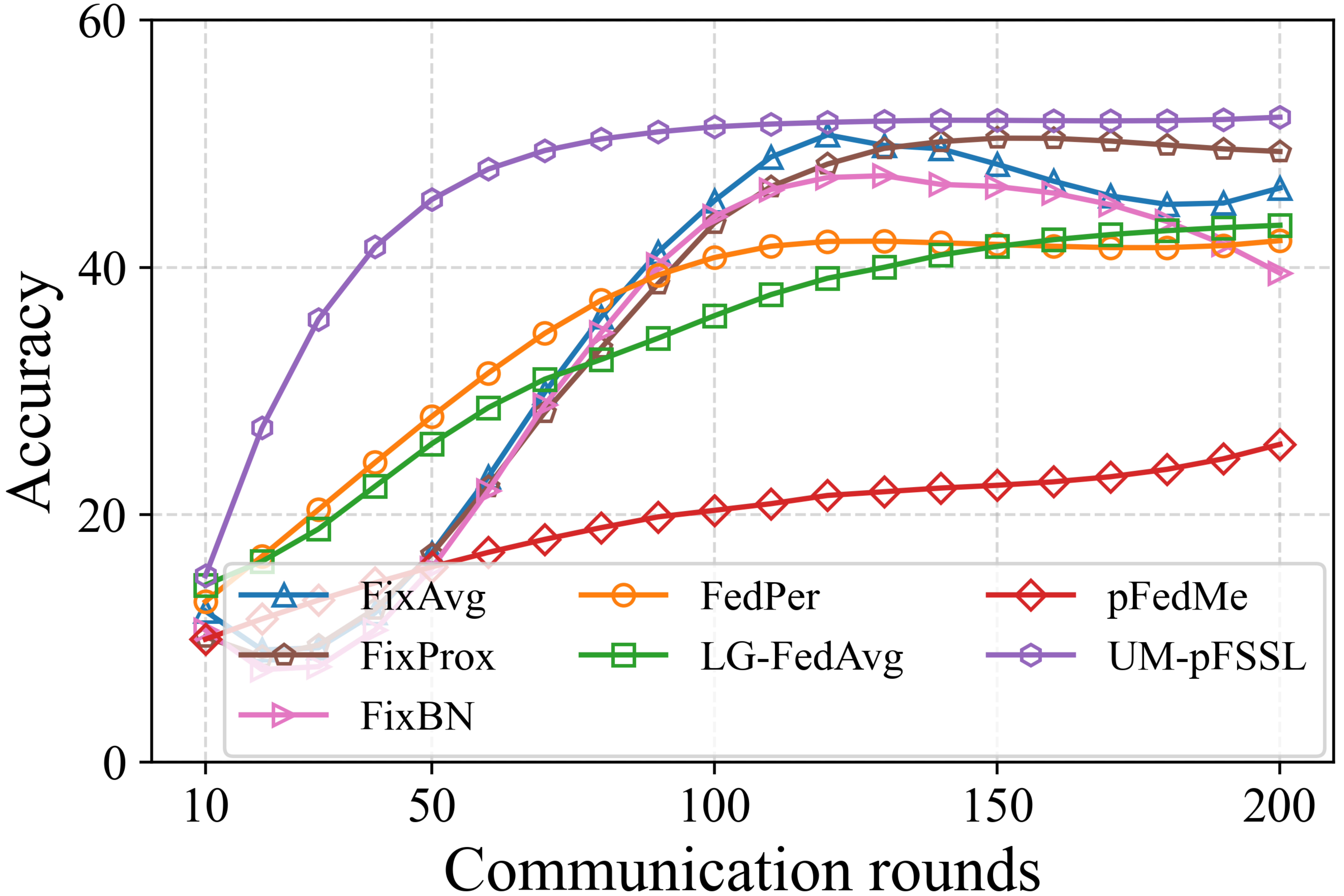}}
   \subfigure[$\alpha=1$]{\includegraphics[width=1.7in]{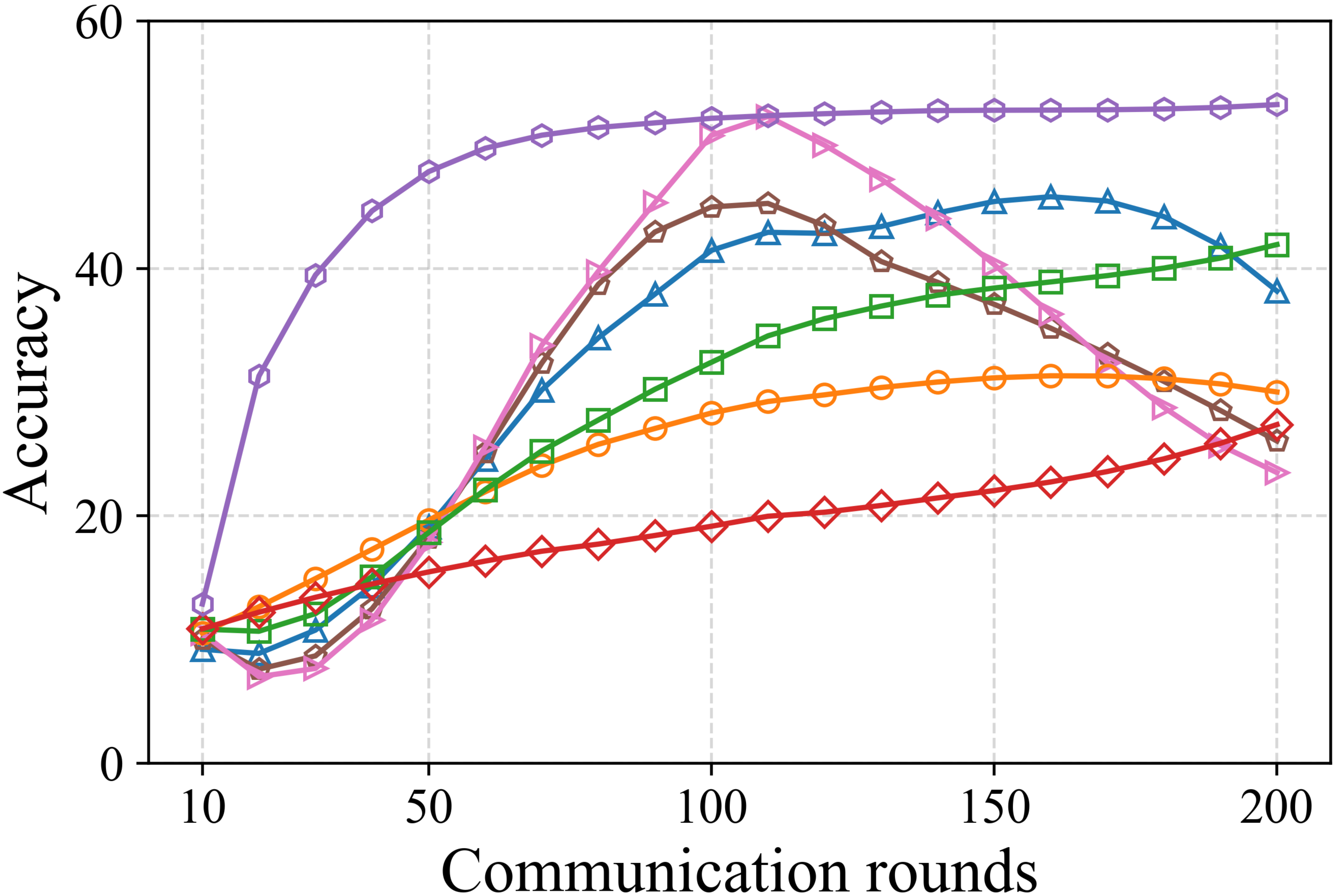}}
   \subfigure[$\alpha=5$]{\includegraphics[width=1.7in]{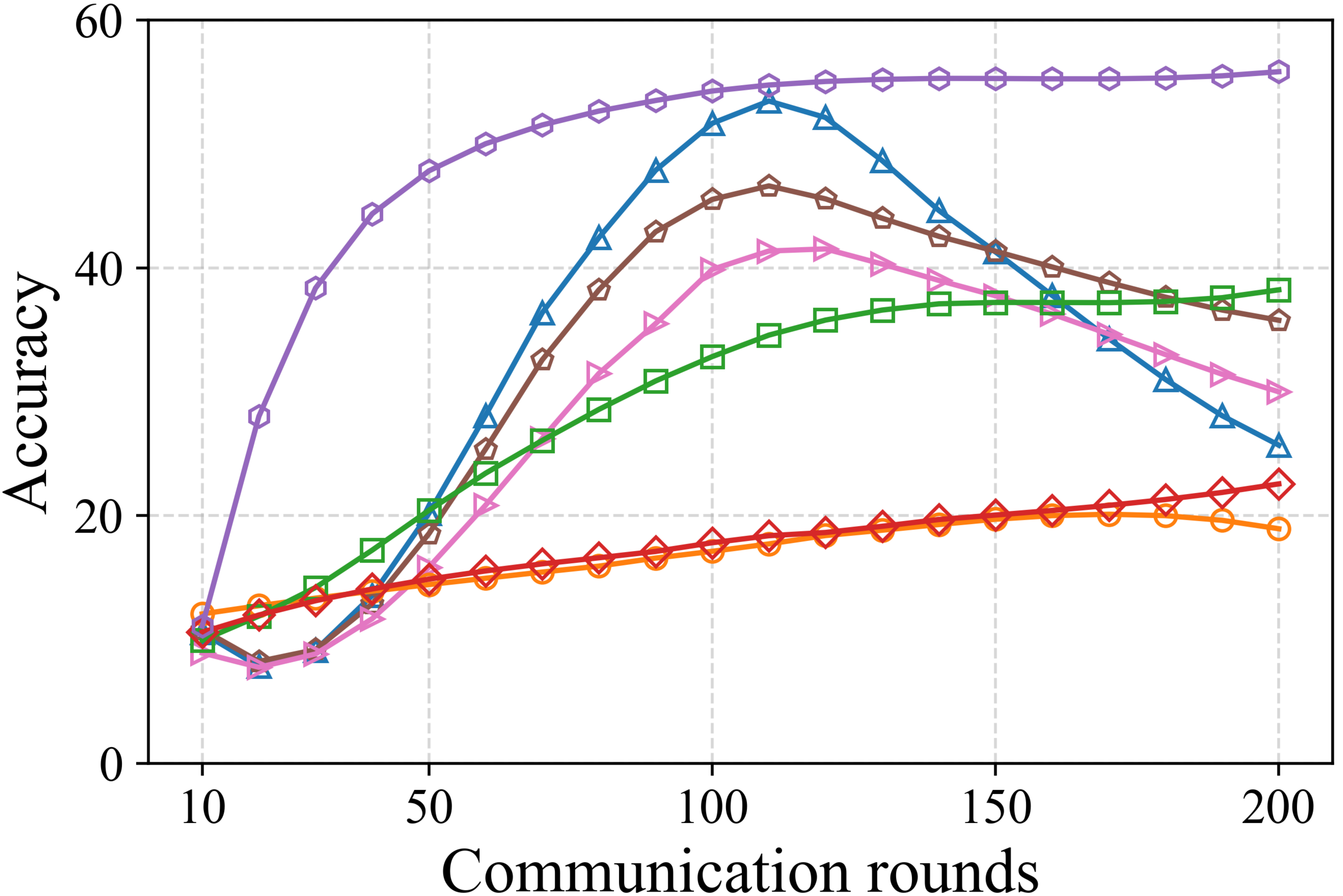}}
   \subfigure[$\alpha=10$]{\includegraphics[width=1.7in]{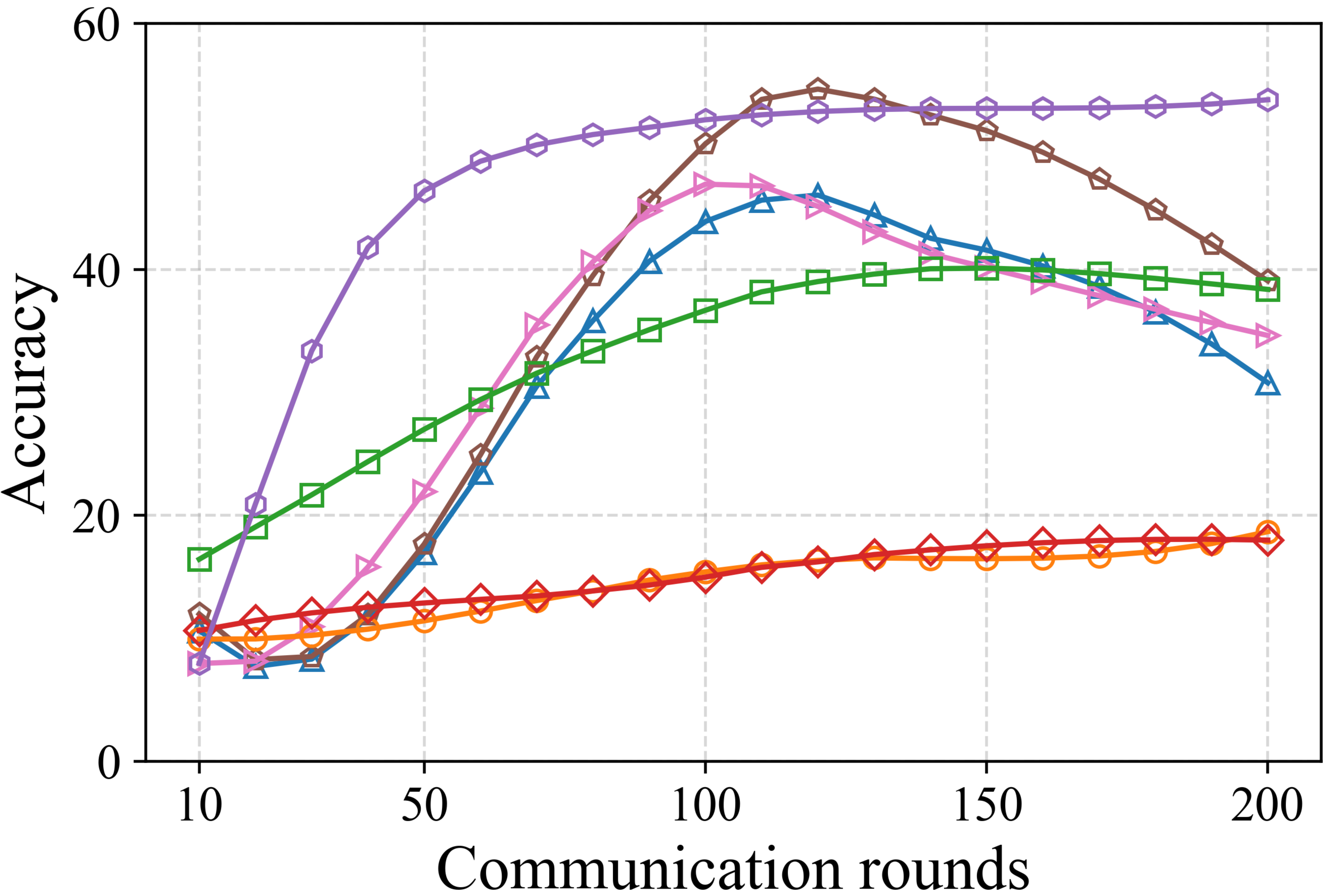}}
   \caption{Mean validation accuracy comparison of different methods on CIFAR-10.}
   \label{Fig.5}
\end{figure*}

\par \textbf{Impact of heterogeneity:} TABLE \ref{Tab.1} and TABLE \ref{Tab.2} illustrate the best achievable test accuracy of our proposed method and compared group on different datasets (Sup. represents the supervised methods). From TABLE \ref{Tab.1} we can observe that, on Fashion-MNIST, UM-pFSSL outperforms most of the Non-IID FL methods and personalized methods with highly heterogeneous data. With the heterogeneity decreasing ($\alpha$ increasing), the baseline methods and Non-IID FL methods obtain noticeable performance improvement (about 2\% $\sim$ 7\%) while the personalized methods suffer the accuracy decline. Compared with other peer schemes, UM-pFSSL is able to retain a well-accepted accuracy $>$ 79\% with data dissimilarity across clients varying. TABLE \ref{Tab.2} depicts the result on CIFAR-10 dataset. In this task, personalized methods also show that performance degrades as the heterogeneity increases. The UM-pFSSL obtains comparable performance and even higher accuracy than baseline method FedAvg-SL, although the performance improvement is slightly weaker than the Non-IID methods FixAvg, FixProx and FixBN. The reason behind is that, under the highly heterogeneous setting, the helper selection mechanism can pick out the most related clients that can give most confident prediction about local unlabeled data. On the contrary, in nearly homogeneous context, almost all clients contain the similar distribution, the client with the largest labeled dataset will predominate the helper list of all clients. The potential knowledge aggregation is weakened in this scenario; thus, the performance does not show significant growth.
\begin{figure}[htb]
   \centering
   \subfigure[Fashion-MNIST]{\includegraphics[width=1.7in]{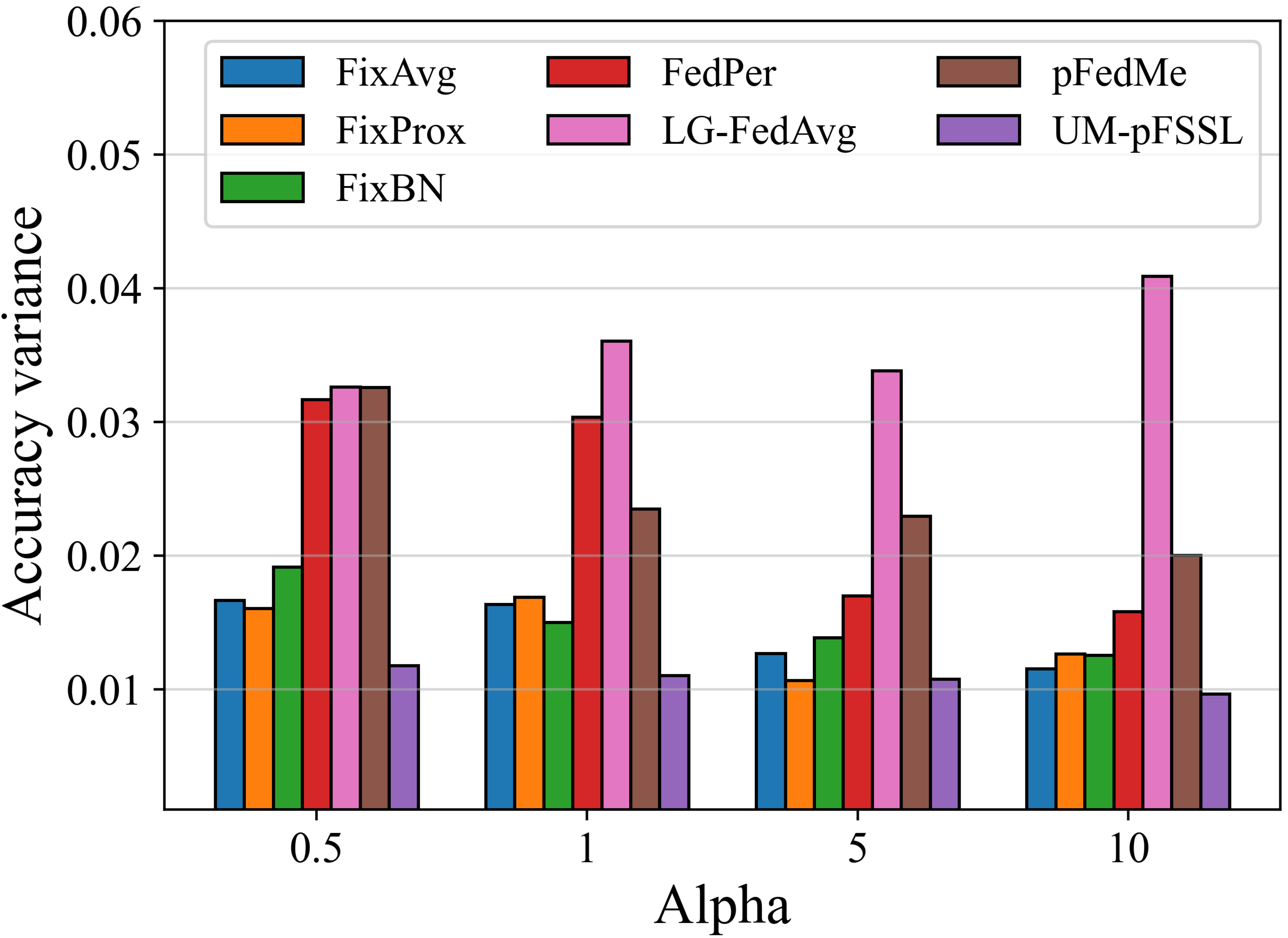}}
   \subfigure[CIFAR-10]{\includegraphics[width=1.7in]{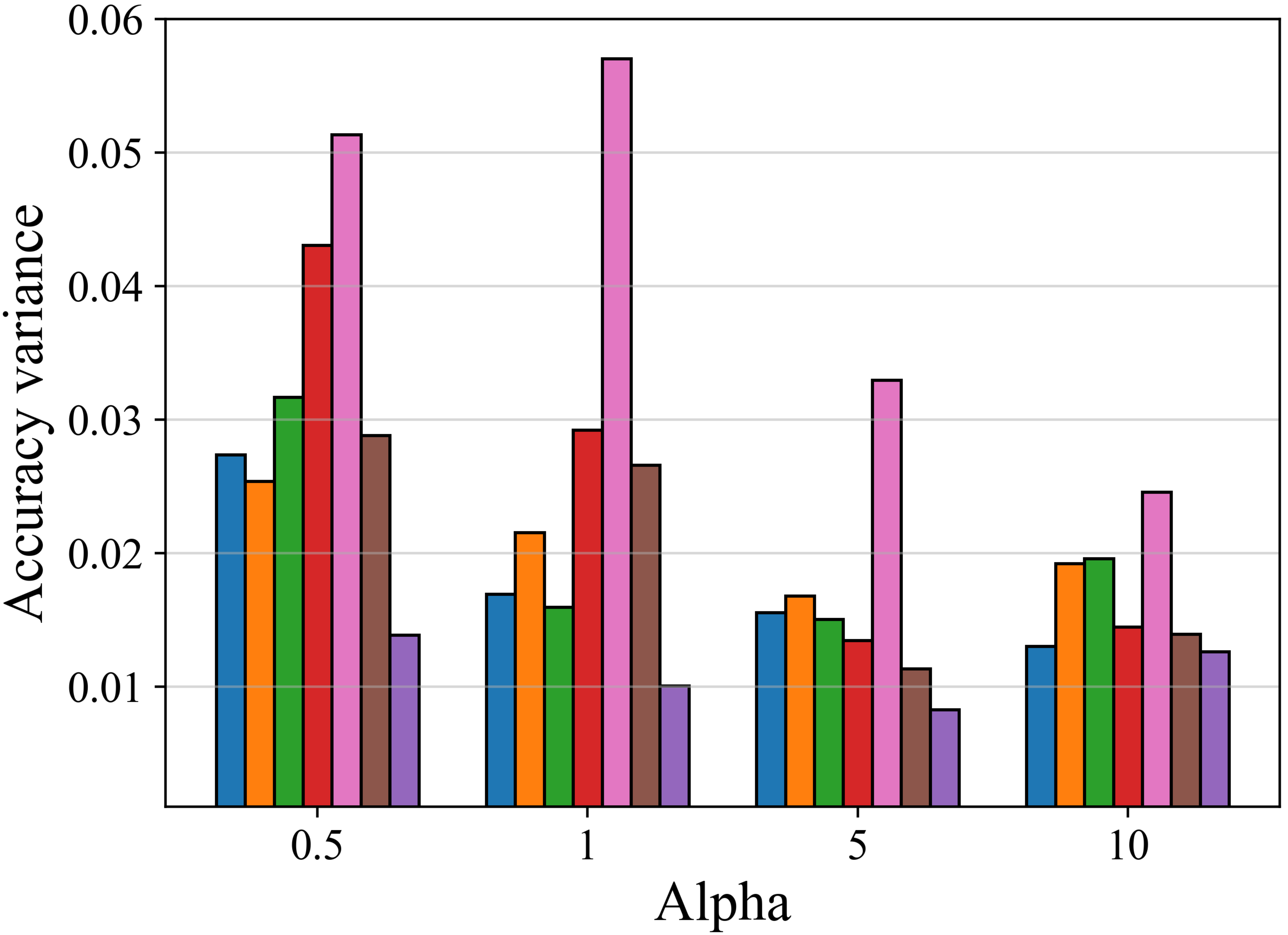}}
   \caption{Final test accuracy variance of FixAvg, FixProx, FixBN, FedPer, LG-FedAvg, pFedMe and UM-pFSSL on Fashion-MNIST and CIFAR-10.}
   \label{Fig.6}
\end{figure}
\par \textbf{Learning efficiency:} Figs. \ref{Fig.4} and \ref{Fig.5} visualize the curve of the average validation accuracy during the training procedure on Fashion-MNIST and CIFAR-10, respectively. As shown in Fig. \ref{Fig.4}, the personalized methods can acquire better accuracy improvement in the first few rounds, especially the LG-FedAvg and UM-pFSSL. As the training rounds increase, Non-IID methods show precipitous accuracy upgrading and quickly turn to flat convergence, while the LG-FedAvg and UM-pFSSL have a more smooth training curve. Additionally, compared to other personalized methods, UM-pFSSL keeps the performance superiority as the heterogeneity decreases. As shown in Fig. \ref{Fig.5}, for more complicate learning task CIFAR-10, the Non-IID methods tend to overfit after 100 rounds and end up to inferior validation accuracy. The UM-pFSSL shows more smooth learning curve and better performance than personalized methods and Non-IID methods with highly heterogeneous data distribution ($\alpha=0.5$). The comparisons indicate that our proposed methods can achieve more stable training and benefit from the heterogeneity of data. The main reason is the pseudo-label selection approach can collect the most useful information from related helpers. With higher the heterogeneity in distributed datasets, each selected helper can only provide confident predictions to a fraction of unlabeled data, thus the potential knowledge aggregation integrates more helpers and obtain more preferable and robust performance.

\par \textbf{Performance fairness:} In personalized FL system, the divergence of local performance between clients can be utilized to qualify the fairness of the FL methods. Specifically, we adopt the variance of the test accuracy of different clients to compare the fairness of our proposed UM-pFSSL with other methods. As shown in Fig. \ref{Fig.6}(a), the most biased method is the LG-FedAvg, which achieves the secondary performance among the personalized methods. In comparison, other methods obviously achieve more unbiased results with the $\alpha$ increases. From Fig. \ref{Fig.6}(b), most of the compared schemes obtain evident decrease in variance with $\alpha$ increases, and the UM-pFSSL reaches the lowest accuracy variance with $\alpha=5$. Based on these phenomena, we can easily obtain that, on both datasets, the UM-pFSSL achieves the least biased results with different degree of heterogeneity. Combined with the results in Figs. \ref{Fig.4} and \ref{Fig.5}, the UM-pFSSL shows both superior accuracy and comparable fairness in comparison with Non-IID and personalized methods, especially with massive data heterogeneity.

\par \textbf{Communication cost:} With a greedily searching strategy in helper-selection stage, the messages transmitted through network have the size:
\begin{equation}
\operatorname{Cost}_1=\tau K|w|n + \tau K(K-1)|w|n=\tau K^2|w|n,
\end{equation}
which includes the model upload of $\tau K$ selected clients and peer model download ($\tau$ is the sample rate for training nodes, $K$ is the total number of clients); $|w|$ denotes parameter size of the uniform neural architecture, and $n$ is the number of rounds.

\par With the helper selection protocol, in first $F$ rounds, we replace $R$ helpers in the helper list and update the models of total $M$ helpers every $\nu$ rounds through cloud server (these operations are executed on all $K$ clients). In some cases, the update or replacement should be skipped if there is no modification of existing $M$ helpers or suitable substitutes for $R$ helpers are not found. Therefore, the upper bound of the communication cost is
\begin{equation}
\operatorname{Cost}_{2}=K M|w| \frac{n}{\nu}+ F R K |w|+\tau K|w| n.
\end{equation}

\par The difference of $\operatorname{Cost}_{2}$ and $\operatorname{Cost}_{1}$ can be derived as
\begin{equation}
\Delta=\operatorname{Cost}_{1}-\operatorname{Cost}_{2}=\left[\tau K n - FR - \frac{M+\tau \nu}{\nu} n\right] K |w|.
\end{equation}

\par Physically, $FR$ in Eq. (29) represents the number of helpers that each user can search for during the entire training period. In the experiments, we set $F$ to 30 and $R$ to 2 by default. In such a case, we have $\tau K n = 2000$, $FR=60$ and $n(M+\tau \nu)/\nu=120$. There is a significant gap between $\operatorname{Cost}_{1}$ and $\operatorname{Cost}_{2}$, more than 90\% of the communication cost is saved. To intuitively reveal the effect of $FR$, we evaluate the communication cost of UM-pFSSL on two datasets by setting $F$ to different number with $\alpha=0.5$. As shown in Fig. \ref{Fig.7}, higher $F$ introduces more communication burden of the system and but always improves the test accuracy accordingly; while $F$ exceeds 30, the performance improvement effect gradually weakens. The results show that, our scheme can save considerable communication overhead with acceptable performance sacrifice.

\begin{figure}[htb]
   \centering
   \subfigure[Fashion-MNIST]{\includegraphics[width=1.7in]{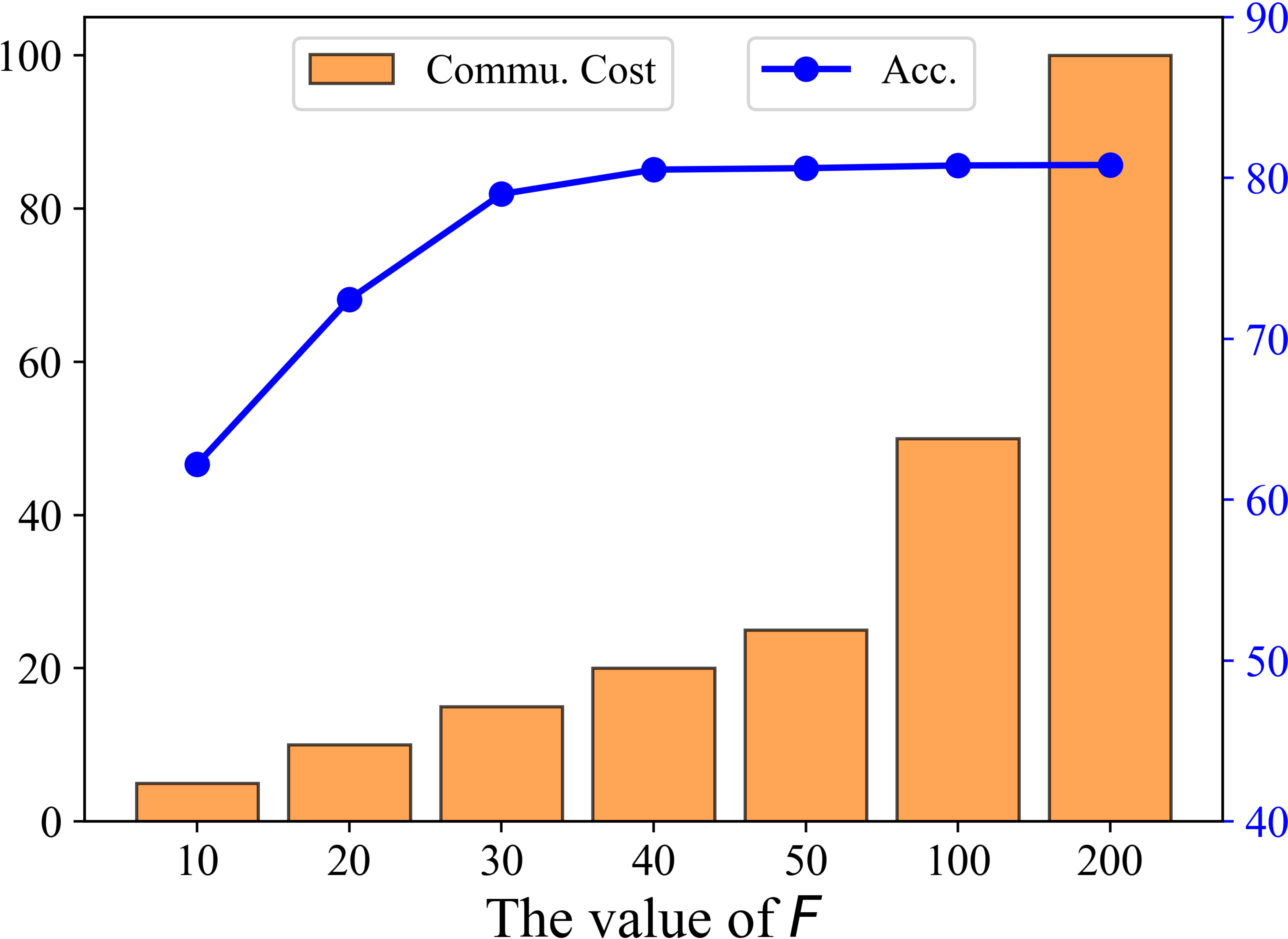}}
   \subfigure[CIFAR-10]{\includegraphics[width=1.7in]{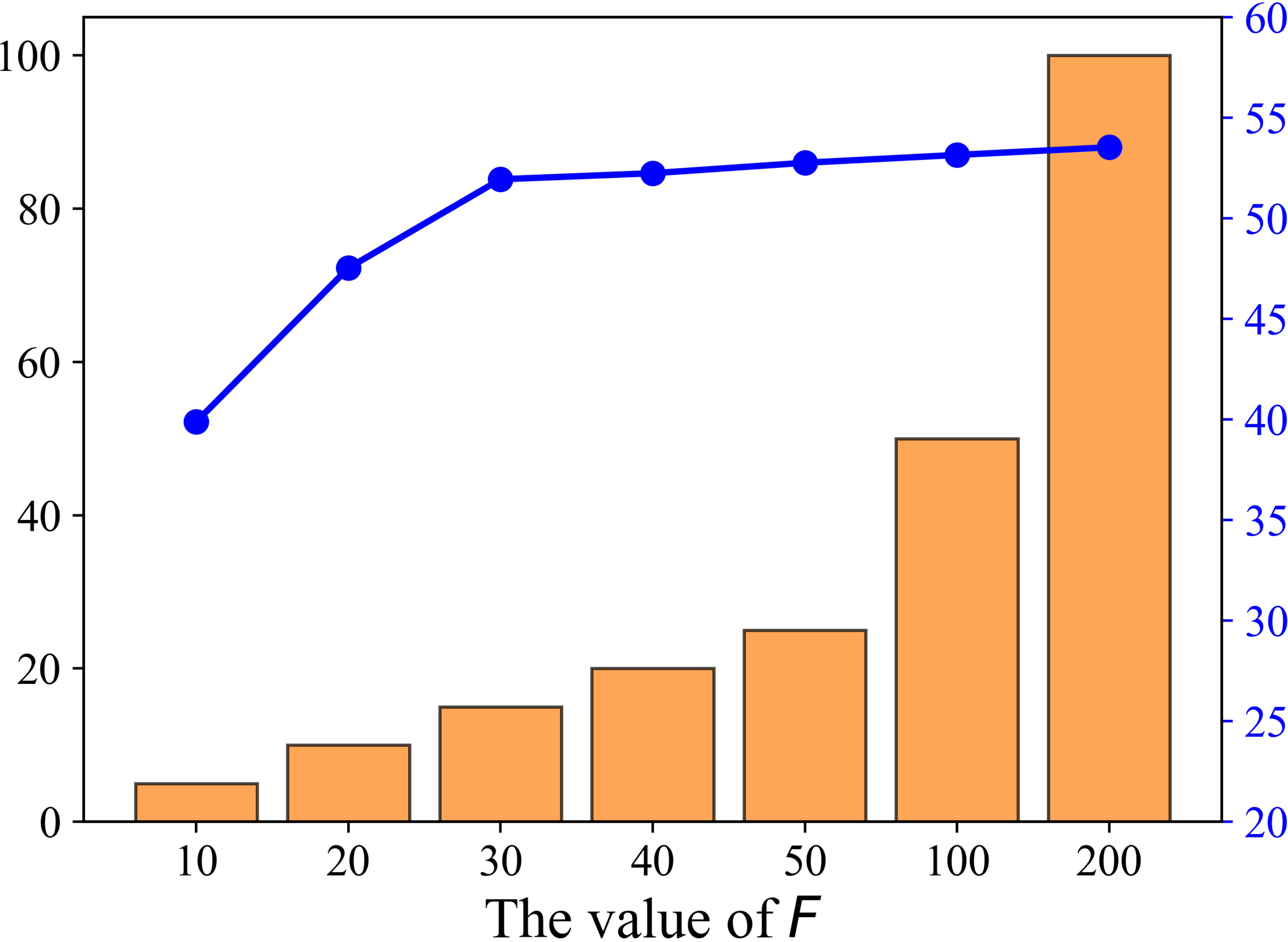}}
   \caption{The effect of $F$ on communication cost and model performance during helper selection. We take the communication cost of $F=200$ as the upper bound, and the communication costs represented in the figure are the percentage value of the actual cost (with different $F$) over this upper bound.}
   \label{Fig.7}
\end{figure}

\subsection{Ablation Study}
\par To show the detailed contributions of the components in UM-pFSSL, we conduct ablation experiments with two datasets. In detail, we separately test the effect on the final model performance using the two components in Eq. (13), i.e., the entropy term (EN) and the test accuracy term (TA). As shown in TABLE \ref{Tab.3}, the entropy term outperforms the accuracy term in all sorts of situations, and the fusion of them can even further improve the performance.

\begin{table}[htbp]
   \centering
   \caption{The results of key components in UM-pFSSL.}
   \setlength{\tabcolsep}{3.5mm}{\begin{tabular}{ccccc}
      \toprule
       ~ & \textbf{Dataset}      & TA  & EN & EN + TA\\
      \midrule
      \multirow{2}*{$\alpha=0.5$}& Fashion-MNIST & 71.39 & 73.25 & 79.00  \\
      					~ & 		   CIFAR-10  & 41.97 & 45.82 & 51.14  \\
	  \midrule
	  \multirow{2}*{$\alpha=1$}& Fashion-MNIST  & 75.11 & 77.74 & 80.93  \\
      					~ & 		   CIFAR-10 & 46.16 & 48.93 & 52.24  \\
	  \midrule
	  \multirow{2}*{$\alpha=5$}& Fashion-MNIST  & 76.39 & 78.25 & 81.16  \\
      					~ & 		   CIFAR-10 & 47.63 & 49.32 & 52.83  \\
	  \midrule
	  \multirow{2}*{$\alpha=10$}& Fashion-MNIST  & 76.83 & 79.19 & 81.49  \\
      					~ & 		   CIFAR-10  & 47.95 & 50.47 & 54.03  \\
      \bottomrule
   \end{tabular}}
   \label{Tab.3}
\end{table}

The pseudo-label error rate of the key compoments with $\alpha=0.5$ is visualized in Fig. \ref{Fig.8}. As the training rounds increase, UM-pFSSL with EN shows more smooth error decreasing than the one with TA. While the UM-pFSSL with TA presents a rapid dropping in 50-100 rounds and then oscillates around a stable value. The combination of the two shows the same downward trend as EN and has a lower error rate than both. This result clearly tells us that with our designed corresponding metric, the selected helpers are able to annotate unlabeled data effectively.

\begin{figure}[htb]
   \centering
   \subfigure[Fashion-MNIST]{\includegraphics[width=1.7in]{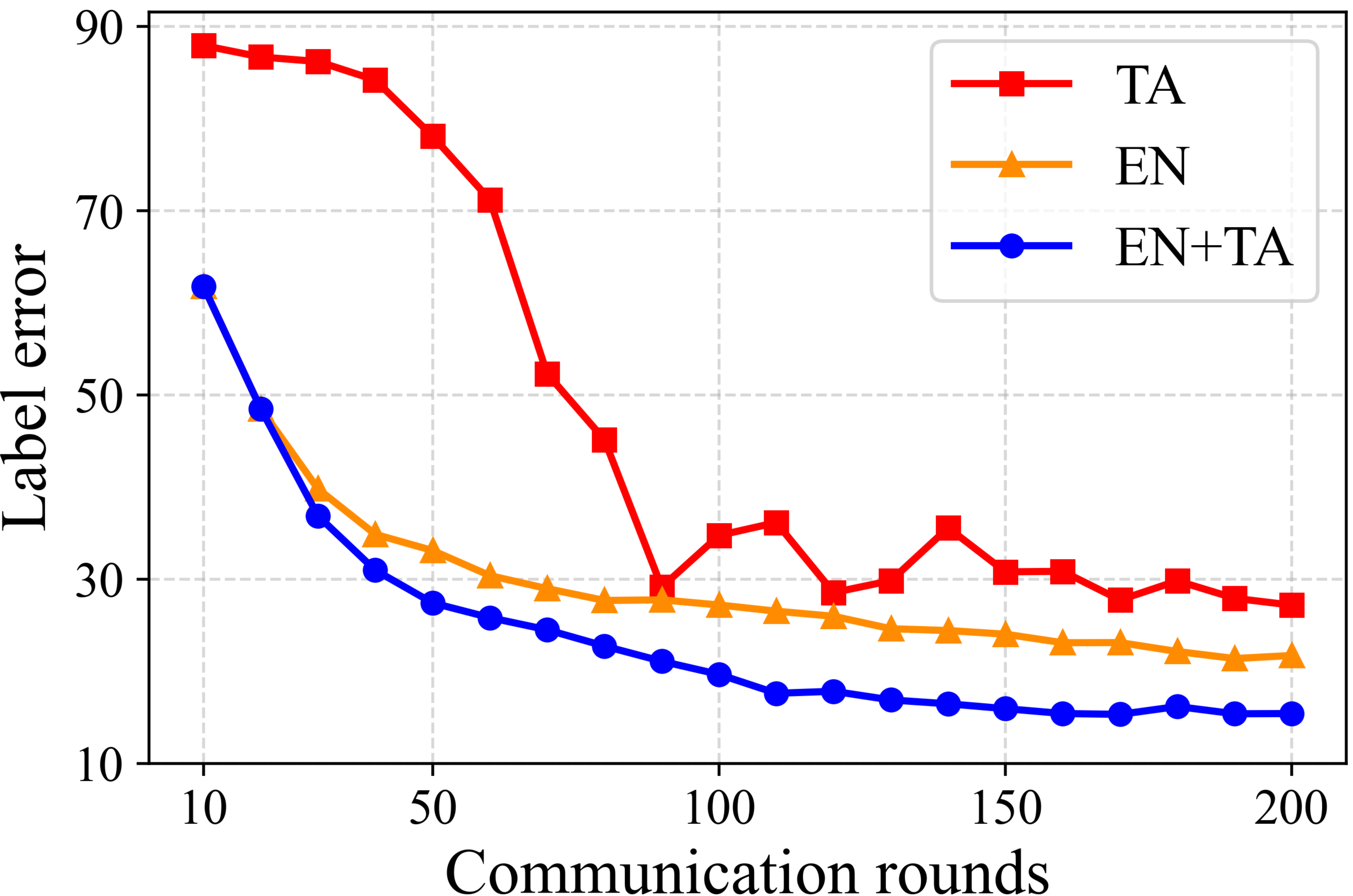}}
   \subfigure[CIFAR-10]{\includegraphics[width=1.7in]{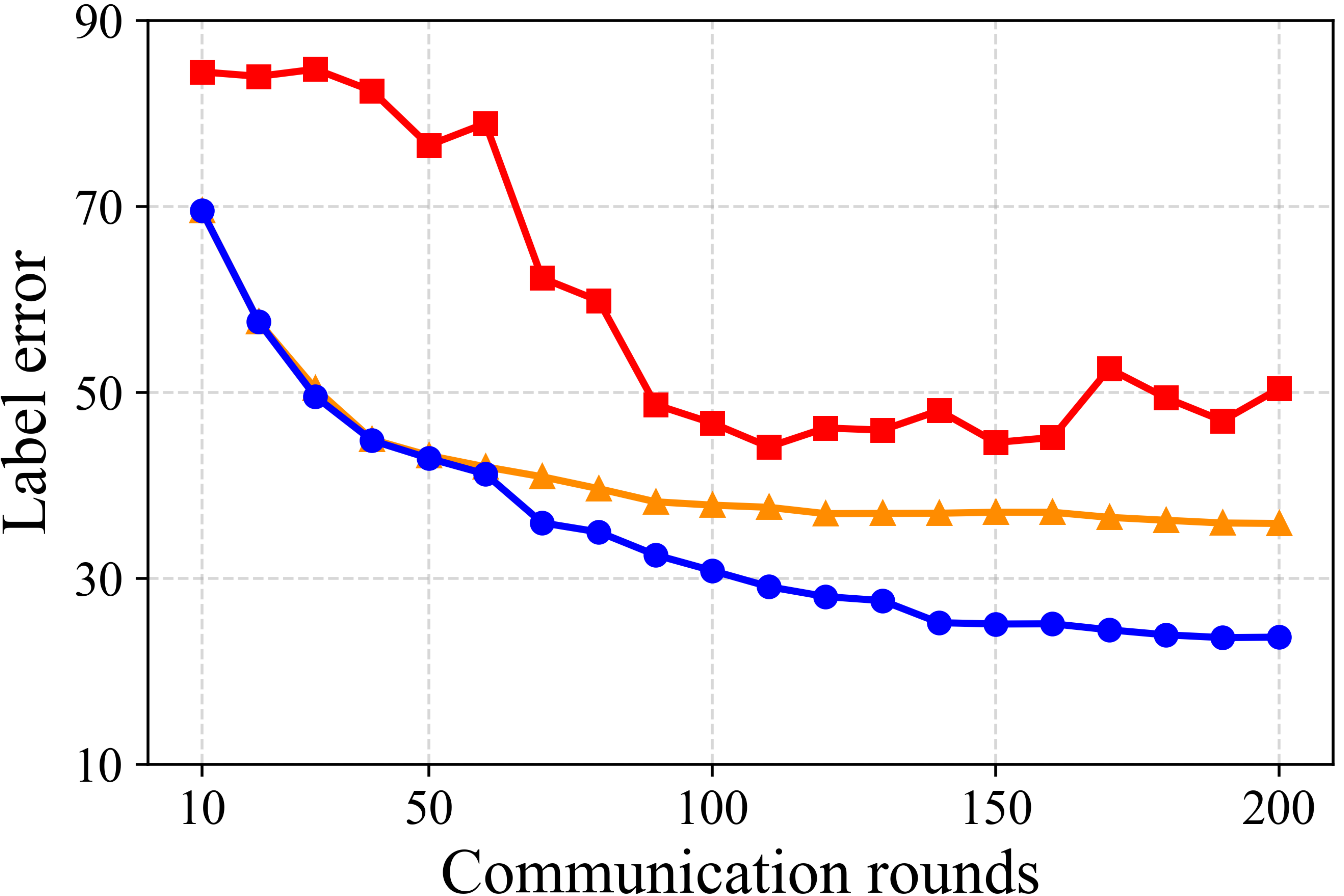}}
   \caption{Error rate for labels provided by helpers when using key components.}
   \label{Fig.8}
\end{figure}

\par We also give the distribution of data features with less heterogeneity ($\alpha=5$). In this situation, we take the output of the final hidden unit as the latent feature of input data. Since the models are fully personalized, we pick the optimal model in each case for visualization. As depicted in Fig. \ref{Fig.9}, with the combination of EN and AT, the model manifestly improves the discrimination on hard samples (the green and red classes in the figure).

\begin{figure}[htb]
   \centering
   \includegraphics[width=3in]{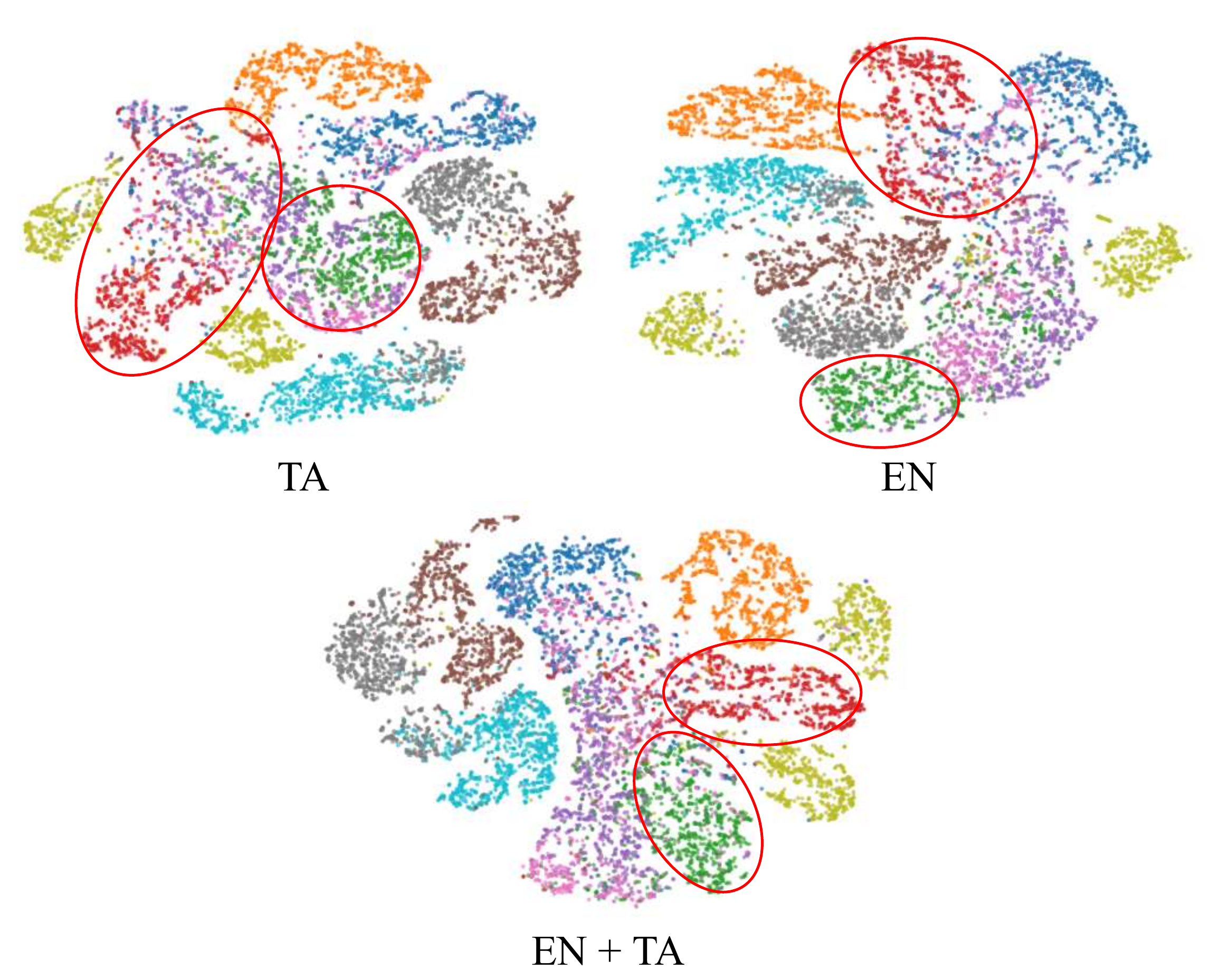}
   \caption{t-SNE plots: the distribution of latent features encoded by trained models with $\alpha=5$ on Fashion-MNIST.}
   \label{Fig.9}
\end{figure}

\subsection{Hyper-parameter Analysis}
\par Firstly, we discuss the influence of the $M$ on the final experimental results in helper selection. We adjust this parameter from 3 to 15, and visualize the results in Fig. \ref{Fig.10}. The results show that increasing the number of helpers has obvious benefits for systems with less heterogeneity. For those with higher degree of heterogeneity, increasing helpers may lead to a decrease in model performance. This stems from the model aggregation procedure described in Section  \ref{sec:paradigm}. With this operation, aggregating models from heterogeneous clients reduces the convergence speed and training stability of the model.

\begin{figure}[htb]
   \centering
   \subfigure[Fashion-MNIST]{\includegraphics[width=1.7in]{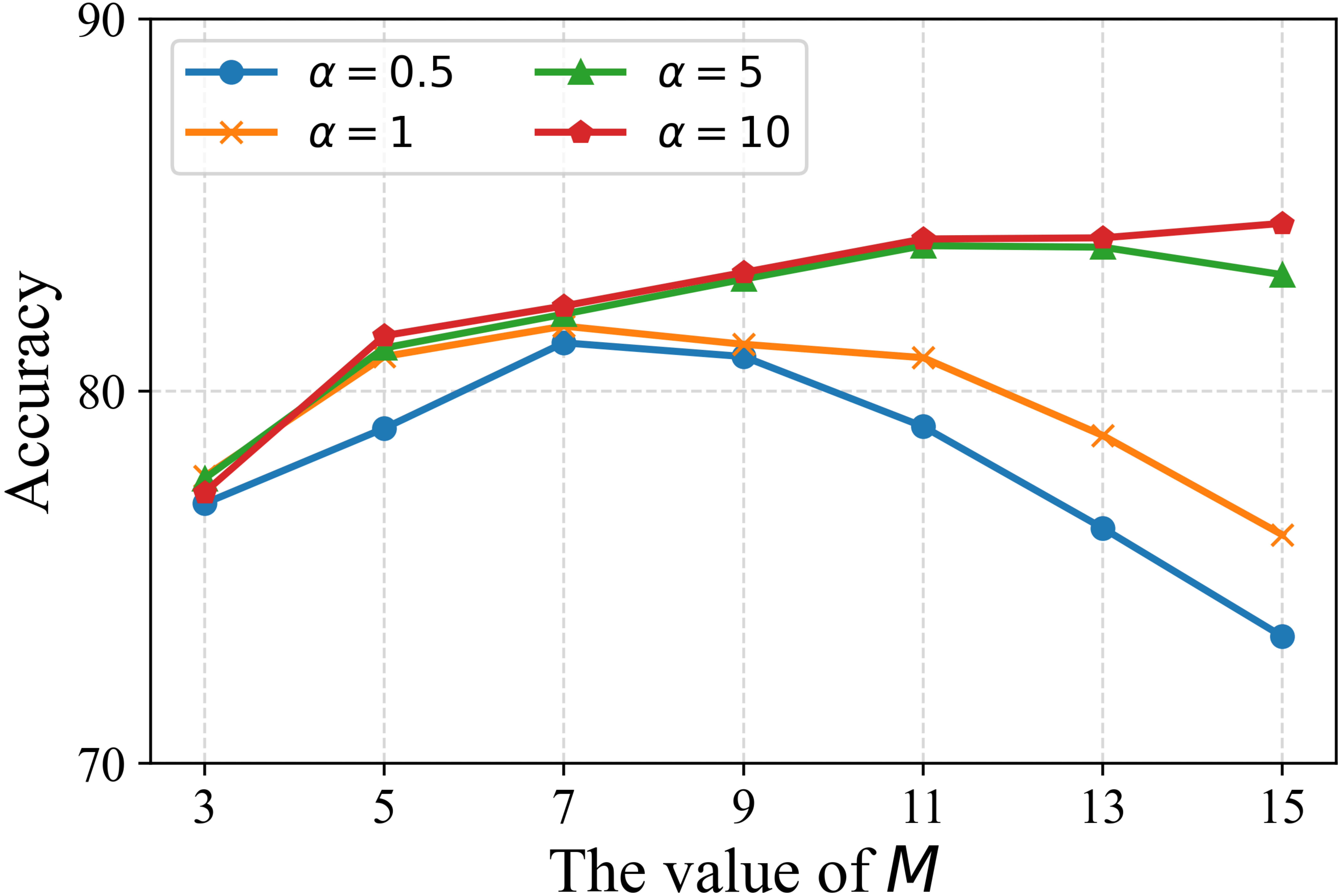}}
   \subfigure[CIFAR-10]{\includegraphics[width=1.7in]{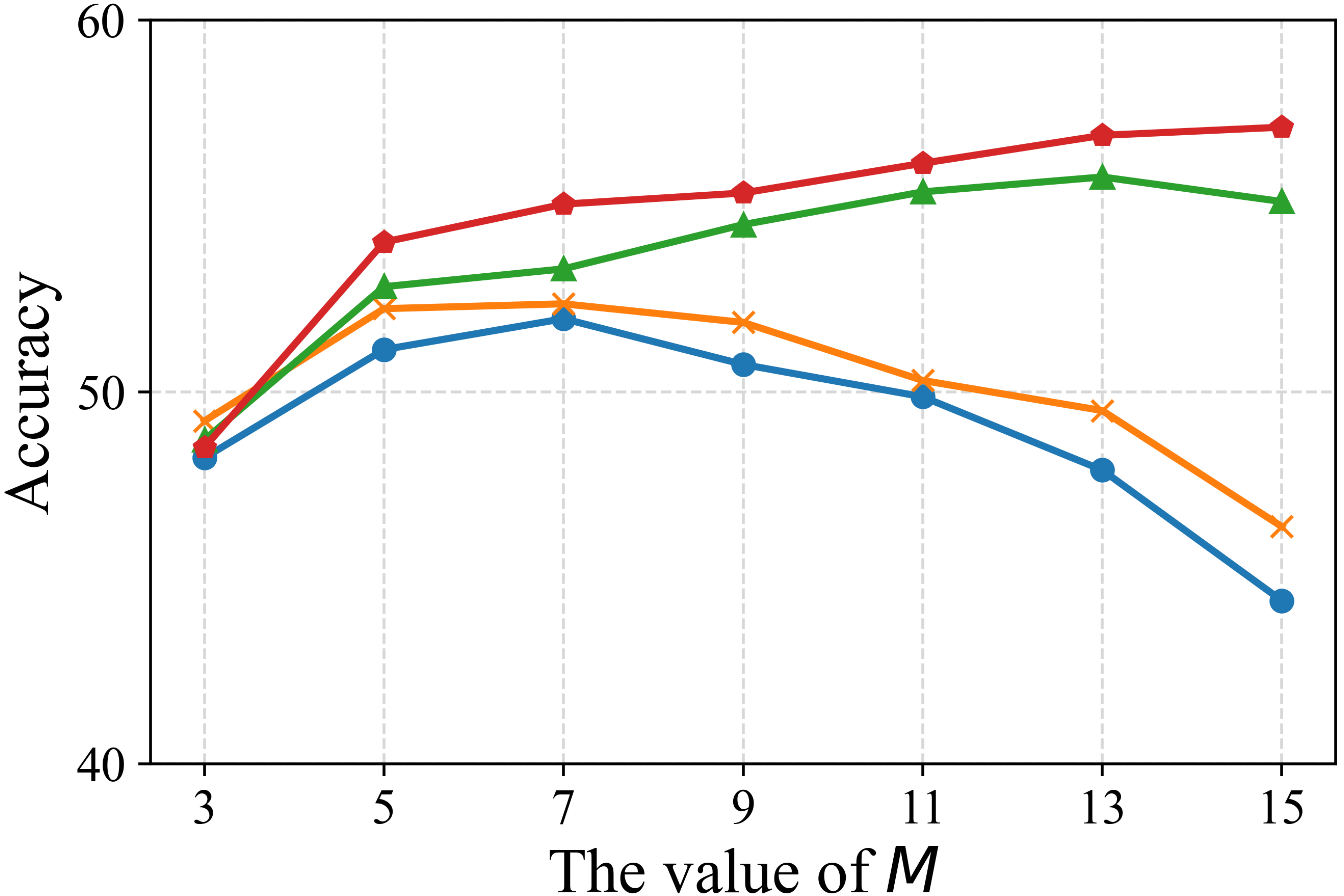}}
   \caption{The influence of the $M$ on the final experimental results.}
   \label{Fig.10}
\end{figure}

\par The frequency of updating helpers' model $\nu$ also affects performance and communication efficiency. We adjust $\nu$ from 1 to 20, and evaluate communication cost and model performance on Fashion-MNIST with $\alpha=0.5$. The results are shown in TABLE \ref{Tab.4}. A larger $\nu$ can save certain communication costs, but as the value of $\nu$ gets larger, the degree of performance degradation also increases.

\begin{table}[htbp]
   \centering
   \caption{INFLUENCE OF $\nu$ ON UM-pFSSL.}
   \setlength{\tabcolsep}{2mm}{\begin{tabular}{c|ccccc}
      \toprule
      \textbf{$\nu$} & 1 (baseline) & 5 & 10 (default) & 15 & 20\\
      \midrule
      Accuracy & 80.32 & 79.85 & 79.00 & 76.68 & 73.21\\
      Cost & 100\% & 26\% & 16.7\% & 13.5\% & 12\% \\
      \bottomrule
   \end{tabular}}
\label{Tab.4}
\end{table}

\par According to Eq. (29), the client sampling rate $\tau$ linearly affects the communication load of the system. Same as TABLE \ref{Tab.4}, we adjust it from 0.05 to 1, and show the results in TABLE \ref{Tab.5}. Different from $\nu$, a larger $\tau$ means more local training performed by each client. The performance improvement brought by increasing $\tau$ is more significant than that of $\nu$.

\begin{table}[htbp]
   \centering
   \caption{INFLUENCE OF $\tau$ ON UM-pFSSL.}
   \setlength{\tabcolsep}{2mm}{\begin{tabular}{c|ccccc}
      \toprule
      \textbf{$\tau$} & 0.05 & 0.1 (default) & 0.2 & 0.5 & 1.0 (baseline) \\
      \midrule
      Accuracy & 77.46 & 79.00 & 81.75 & 82.03 & 83.21\\
      Cost & 5.7\% & 10.7\% & 20.6\% & 50.4\% & 100\% \\
      \bottomrule
   \end{tabular}}
\label{Tab.5}
\end{table}

\subsection{Experiments with Real-world Dataset}
\par In order to examine the performance of our method in real-world applications, we additionally adopt two medical imaging datasets from MedMNIST collection [42]: OrganMNIST(Axial) and PathMNIST datasets. OrganMNIST consists of 11 types of body organs for classification task. PathMNIST is comprised of 9 types of tissues for classification task. We set the number of engaged clients $K$ to 20 and client sample rate $\tau$ to 1. The other settings are the same as Subsection A. TABLE \ref{Tab.7} and TABLE \ref{Tab.8} give the results of the test accuracy on the two medical datasets. Clearly, our method exhibits superior performance than compared methods, with margins $>$ 2\% at varying degrees of heterogeneity. In addition, we aggregate the test results on all clients and compare with the mixture of all Non-FL methods, which is the ensemble of FixAvg, FixProx and FixBN by averaging their output. The detailed classification results through the confusion matrices are shown in Fig. \ref{Fig.11} (with $\alpha=0.5$). From this figure, we can learn that, UM-pFSSL achieves higher worst-case accuracy and superior accuracy on more than half of the classes than mixed method. These additional experiments demonstrate that our scheme can achieve superior performance on real-world datasets compared to related methods.

\begin{table}[htbp]
   \centering
   \caption{THE COMPARISION OF BEST TEST ACCURACY ON PathMNIST WITH DIFFERENT $\alpha$.}
   \setlength{\tabcolsep}{4.5mm}{\begin{tabular}{ccccc}
      \toprule
      \textbf{Algorithms} & $\alpha$=0.5  & $\alpha$=1  & $\alpha$=5  & $\alpha$=10\\
      \midrule
	  FixAvg & 70.48 & 72.16 & 75.63 & 77.28 \\
      FixProx & 71.36 & 73.57 & 76.07 & 77.85 \\
      FixBN  & 72.46 & 75.19 & 78.38 & 79.33 \\
     \midrule
      FedPer & 68.09 & 64.71 & 62.36 & 60.13     \\
	 LG-FedAvg & 74.25	& 73.49 & 73.41 & 72.71 \\
      pFedMe & 64.19 & 63.73 & 62.32 & 60.17\\
     \midrule
     UM-pFSSL & \textbf{78.64} & \textbf{79.83} & \textbf{80.94} & \textbf{81.47}\\
      \bottomrule
   \end{tabular}}
   \label{Tab.7}
\end{table}

\begin{table}[htbp]
   \centering
   \caption{THE COMPARISION OF BEST TEST ACCURACY ON OrganMNIST WITH DIFFERENT $\alpha$.}
   \setlength{\tabcolsep}{4.5mm}{\begin{tabular}{ccccc}
      \toprule
      \textbf{Algorithms}              & $\alpha$=0.5  & $\alpha$=1  & $\alpha$=5  & $\alpha$=10\\
      \midrule
	  FixAvg & 78.79 & 80.82 & 82.38 & 82.74 \\
      FixProx & 79.37 & 82.81 & 84.61 & 84.65    \\
      FixBN  & 81.54 & 83.28 & 85.52 & 85.93\\
     \midrule
      FedPer & 72.86 & 69.93 & 68.76 & 68.53   \\
	 LG-FedAvg & 75.29 & 74.78 & 73.62 & 72.97 \\
      pFedMe & 66.59 & 64.27 & 63.79 & 62.36\\
     \midrule
     UM-pFSSL & \textbf{86.62} & \textbf{87.23} & \textbf{87.68} & \textbf{88.53}\\
      \bottomrule
   \end{tabular}}
   \label{Tab.8}
\end{table}

\begin{figure}[htb]
   \centering
   \subfigure[PathMNIST + UM-pFSSL]{\includegraphics[width=1.7in]{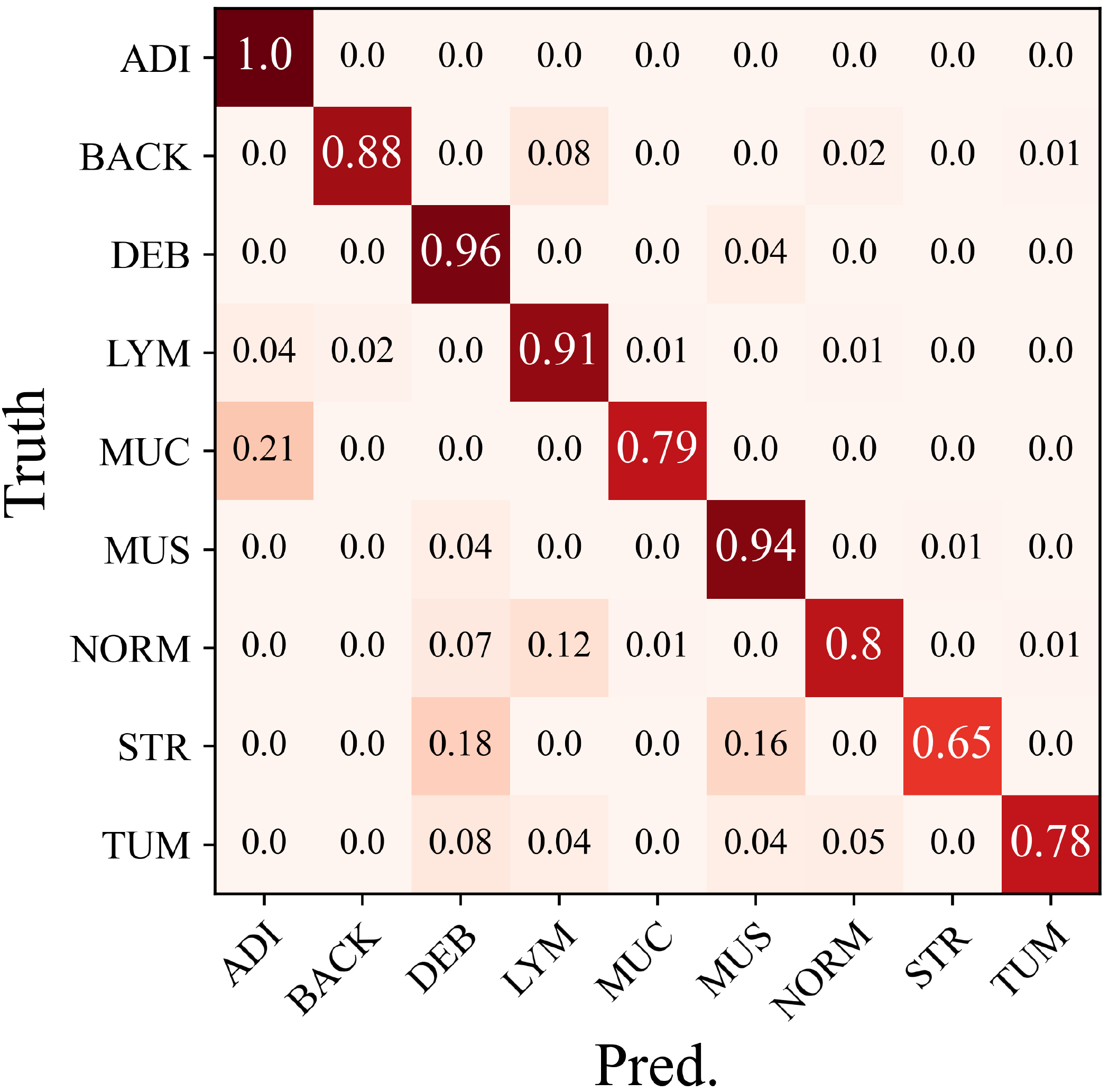}}
   \subfigure[PathMNIST + Mix]{\includegraphics[width=1.7in]{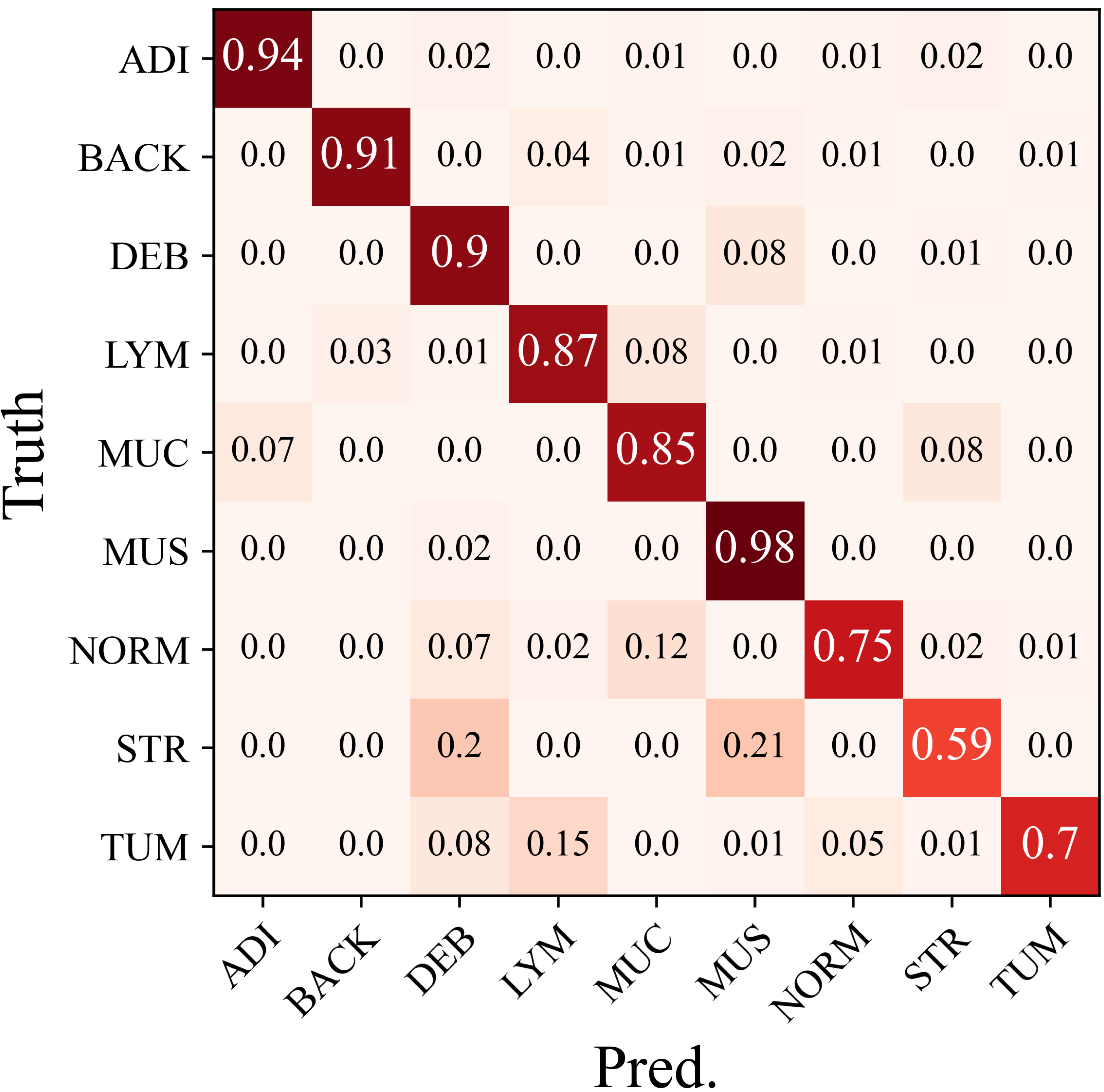}}
   \subfigure[OrganMNIST + UM-pFSSL]{\includegraphics[width=1.7in]{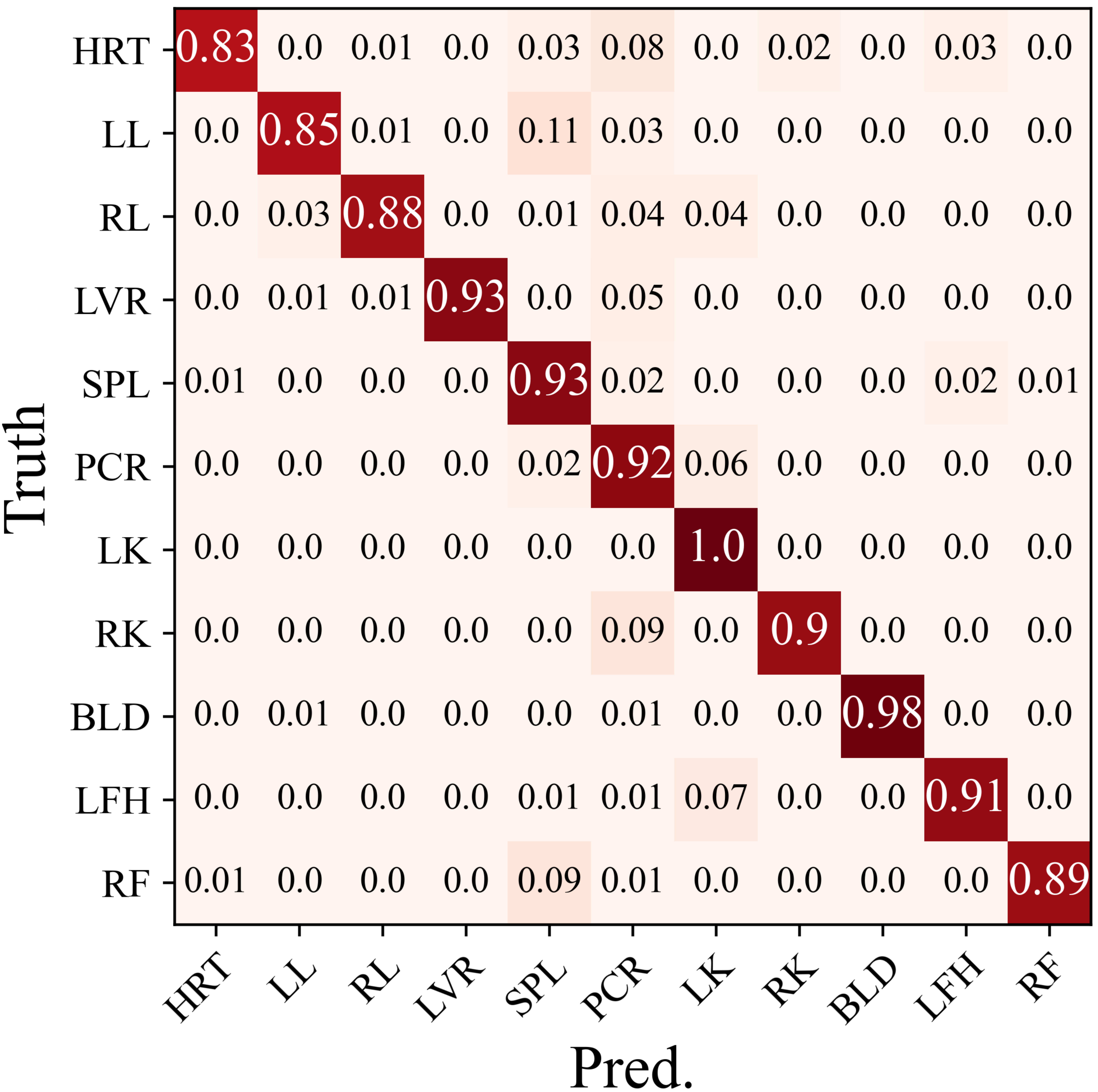}}
   \subfigure[OrganMNIST + Mix]{\includegraphics[width=1.7in]{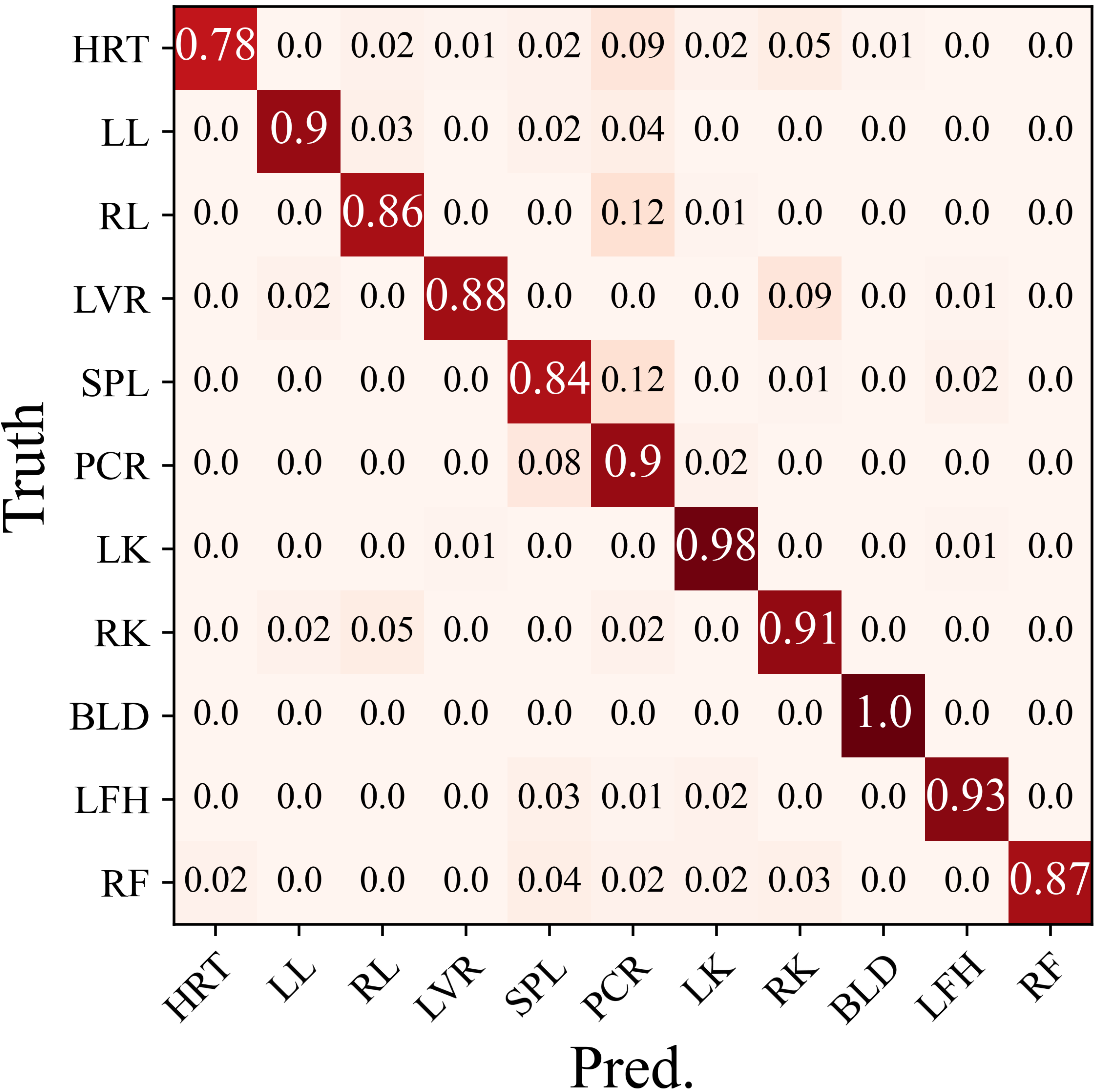}}
   \caption{The confusion matrices of UM-pFSSL and Mix method on different datasets. The vertical axis of the matrices represents the true label of the samples and the horizontal axis represents the predicted label of samples, each decimal is the proportion of the predicted label over true label. The labels on the axes are the abbreviations of the category names: ADI (adipose tissue), BACK (background), DEB (debris), LYM (lymphocytes), MUC (mucus), MUS (smooth muscle), NORM (normal colon mucosa), STR (cancer-associated stroma), TUM (colorectal adenocarcinoma epithelium). HRT (heart), LL (left lung), RL (right lung), LVR (liver), SPL (spleen), PCR (pancreas), LK (left kidney), RK (right kidney), BLD (bladder), LFH (left femoral head), RF (right femoral head).}
   \label{Fig.11}
\end{figure}
\section{Conclusions}\label{sec:conclusion}
\par In this work, we propose UM-pFSSL to address the issues caused by Non-IID data and label scarcity in pFL setting. In detail, a semi-supervised learning paradigm is developed, which allows each client to gather the knowledge from related neighbors to gain confident pseudo labels for local data. Based on this paradigm, we design a new client-relation measurement, which theoretically guarantees the generalization ability improvement of local model. To mitigate the heavy communication cost introduced by model exchange, we further present a helper selection strategy. Extensive experiments on two different datasets confirm that, compared to related works, our proposed method obtains the competitive achievements on performance and convergence rate than other related works, and benefits from data heterogeneity. In the future, we plan to extend this work into unsupervised learning based Non-IID FL scenario.

\section*{Acknowledgment}
This work was partially supported by the National Natural Science Foundation of China (61971235), the China Postdoctoral Science Foundation (2018M630590), the Jiangsu Planned Projects for Postdoctoral Research Funds (2021K501C), the 333 High-level Talents Training Project of Jiangsu Province, the 1311 Talents Plan of NJUPT, and the Postgraduate Research and Innovation Project of Jiangsu Province (KYCX21\_0803).

\vspace{-10 mm}
\begin{IEEEbiography}[{\includegraphics[width=1in,height=1.25in,clip,keepaspectratio]{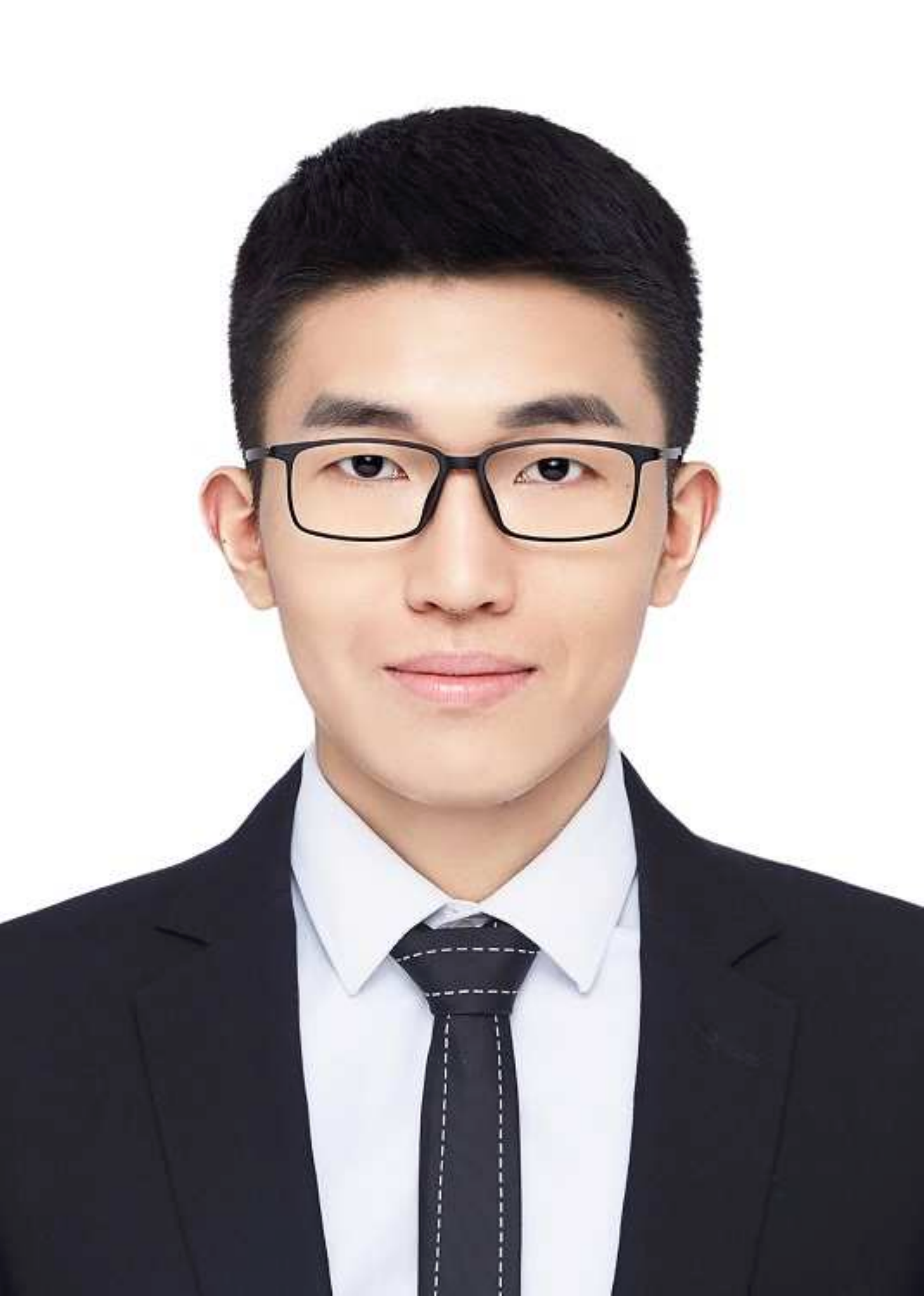}}]{Yanhang Shi}
received his B. E. degree in Internet of Things Engineering from Nanjing University of Posts and Telecommunications in 2019. He is currently pursuing a Ph.D. at the Nanjing University of Posts and Telecommunications. His research interests include deep learning and edge intelligence.
\end{IEEEbiography}
\vspace{-10 mm}
\begin{IEEEbiography}[{\includegraphics[width=1in,height=1.25in,clip,keepaspectratio]{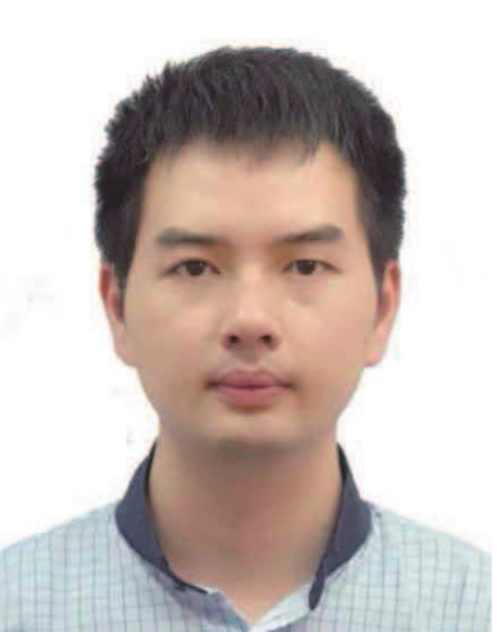}}]{Siguang Chen}
(M'17) is currently a Full Professor at Nanjing University of Posts and Telecommunications. He received his Ph.D. in Information Security from Nanjing University of Posts and Telecommunications, Nanjing, China, in 2011. He finished his Postdoctoral research work in City University of Hong Kong in 2012. From 2014 to 2015, he also was a Postdoctoral Fellow in the University of British Columbia. He has published more than 100 papers and applied 30 patents, serves as Editor of EAI Endorsed Transactions on Cloud Systems and Journal on Internet of Things, Guest Editor of Wireless Communications and Mobile Computing, Security and Communication Networks, and International Journal of Computer Networks and Communications, Corresponding Experts of Engineering Journal, and serves as General Co-Chair of ICAIS/ICCCS 2019-2022 Workshop and the Energy-Secure AIoT 2022. He also serves/served as Session Chair and TPC member of several international conferences, such as IEEE ICC, IEEE GLOBECOM, IEEE EDGE, and IEEE Cloud, etc. His current research interests are in the area of edge intelligence and AIoT.
\end{IEEEbiography}
\vspace{-10 mm}
\begin{IEEEbiography}[{\includegraphics[width=1in,height=1.25in,clip,keepaspectratio]{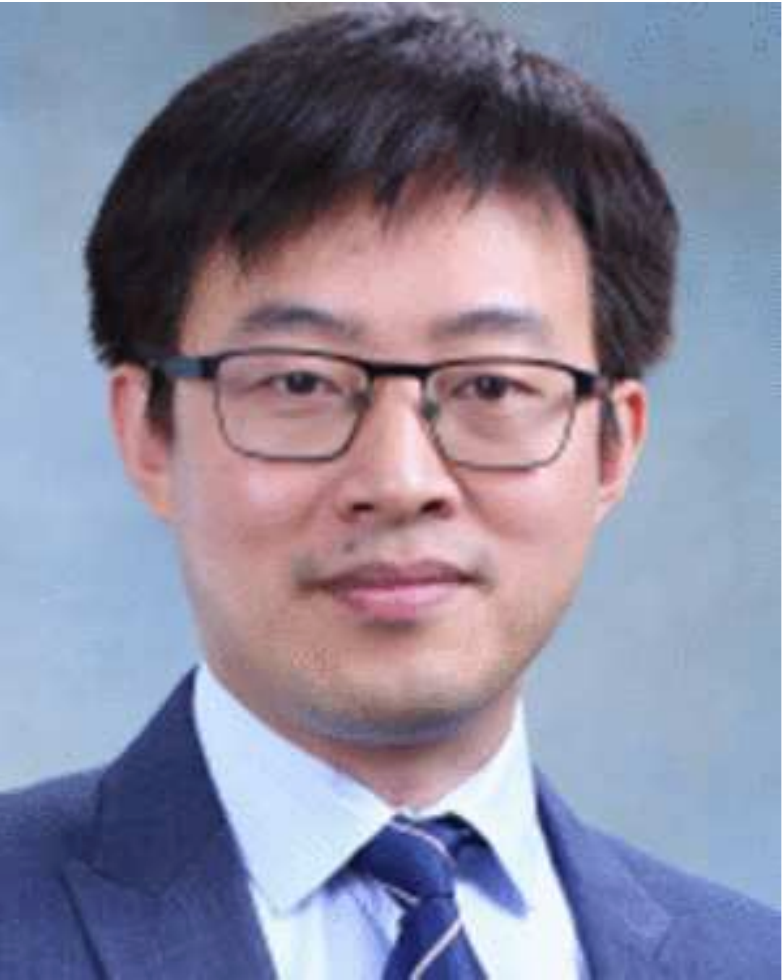}}]{Haijun Zhang}
 (M'13, SM'17) is currently a Full Professor and the Associate Dean of the School of Computer and Communication Engineering and Institute of Artificial Intelligence, University of Science and Technology Beijing, Beijing, China. He was a Postdoctoral Research Fellow with the Department of Electrical and Computer Engineering, University of British Columbia, Vancouver, BC, Canada. He was the Track Co-Chair of WCNC 2020/2021, Symposium Chair of GLOBECOM'19, TPC Co-Chair of INFOCOM 2018 Workshop on Integrating Edge Computing, Caching, and Offloading in Next Generation Networks, and General Co-Chair of GameNets'16. He is the Editor of the IEEE Transactions on Communications, IEEE Transactions on Network Science and Engineering, and IEEE Transactions on Green Communications and Networking. He was the recipient of the IEEE CSIM Technical Committee Best Journal Paper Award in 2018, IEEE ComSoc Young Author Best Paper Award in 2017, and IEEE ComSoc Asia-Pacific Best Young Researcher Award in 2019.
\end{IEEEbiography}

\end{document}